\documentclass[11pt]{article}

\usepackage{amsmath,amssymb,amsthm}

% use Times
\usepackage{times}
% For figures
\usepackage{graphicx} % more modern
\usepackage{subfigure} 

% For citations
%\usepackage{natbib}

% For algorithms
\usepackage{algorithm}
\usepackage{algorithmic}

\usepackage{hyperref}

\usepackage[top=30truemm,bottom=30truemm,left=25truemm,right=25truemm]{geometry}
\usepackage{bbm}

% to compile a camera-ready version, add the [final] option, e.g.:
% \usepackage[final]{nips_2017}

\usepackage[utf8]{inputenc} % allow utf-8 input
\usepackage[T1]{fontenc}    % use 8-bit T1 fonts
\usepackage{hyperref}       % hyperlinks
\usepackage{url}            % simple URL typesetting
\usepackage{booktabs}       % professional-quality tables
\usepackage{amsfonts}       % blackboard math symbols
\usepackage{nicefrac}       % compact symbols for 1/2, etc.
\usepackage{microtype}      % microtypography
\usepackage{graphicx}

\usepackage{color,comment}

\newcommand{\R}{\mathbb{R}}

\newcommand{\mG}{\mathcal{G}}

\newcommand{\mP}{\mathcal{P}}

\newcommand{\mX}{\mathcal{X}}

\newcommand{\mE}{\mathcal{E}}
\renewcommand{\hat}{\widehat}
\renewcommand{\tilde}{\widetilde}

\newtheorem{theorem}{Theorem}
\newtheorem{assumption}[theorem]{Assumption}

\newtheorem{lemma}[theorem]{Lemma}

\newcommand{\etal}{\textit{et al.}}
\usepackage{authblk}

\newcommand{\bc}{\textcolor{blue}}

%\bibliographystyle{abbrv}

%\title{Scalable Tensor Train Decomposition with Statistical Guarantees}

\begin{document}

\title{On Tensor Train Rank Minimization: Statistical Efficiency and Scalable Algorithm}

\author[1]{Masaaki Imaizumi}
\author[2]{Takanori Maehara}
\author[2,3]{Kohei Hayashi}
\affil[1]{Institute of Statistical Mathematics}
\affil[2]{RIKEN Center for Advanced Intelligence Project}
\affil[3]{National Institute of Advanced Industrial Science and Technology}

\date{}

\maketitle

\begin{abstract}
  Tensor train (TT) decomposition provides a space-efficient
  representation for higher-order tensors. Despite its advantage, we
  face two crucial limitations when we apply the TT decomposition to
  machine learning problems: the lack of statistical theory and of
  scalable algorithms. In this paper, we address the
  limitations. First, we introduce a convex relaxation of the TT
  decomposition problem and derive its error bound for the tensor
  completion task. Next, we develop an alternating optimization method
  with a randomization technique, in which the time complexity is as
  efficient as the space complexity is. In experiments, we numerically
  confirm the derived bounds and empirically demonstrate the
  performance of our method with a real higher-order tensor.
\end{abstract}

%%% Local Variables:
%%% mode: latex
%%% TeX-master: "TTcomp_NIPS2017.tex"
%%% End:

\section{Introduction}

Tensor decomposition is an essential tool for dealing with data
represented as multidimensional arrays, or simply, tensors. Through
tensor decomposition, we can determine latent factors of an input
tensor in a low-dimensional multilinear space, which saves the storage
cost and enables predicting missing elements. Note that, a different
multilinear interaction among latent factors defines a different
tensor decomposition model, which yields a ton of variations of tensor
decomposition. For general purposes, however, either Tucker
decomposition~\cite{tucker1966some} or \emph{CANDECOMP/PARAFAC (CP)
  decomposition}~\cite{harshman1970foundations} model is commonly
used.

In the past three years, an alternative tensor decomposition model,
called \emph{tensor train (TT)}
decomposition~\cite{oseledets2011tensor} has actively been studied in
machine learning communities for such as approximating the inference
on a Markov random field~\cite{novikov2014putting}, modeling
supervised learning~\cite{novikov2016exponential, NIPS2016_6211},
analyzing restricted Boltzmann machine~\cite{chen2017equivalence}, and
compressing deep neural networks~\cite{novikov2015tensorizing}.
%~\cite{novikov2014putting,novikov2015tensorizing,novikov2016exponential,NIPS2016_6211,chen2017equivalence}.
A key property is that, for higher-order tensors, TT decomposition
provides more space-saving representation called TT format while
preserving the representation power. Given an order-$K$ tensor (i.e., a
$K$-dimensional tensor), the space complexity of Tucker decomposition
is exponential in $K$, whereas that of TT decomposition is linear
in $K$. Further, on TT format, several mathematical
operations including the basic linear algebra operations can be performed
efficiently~\cite{oseledets2011tensor}.
%TT
%decomposition is now actively used in machine learning context such as
%modeling supervised
%learning~\cite{novikov2016exponential,NIPS2016_6211}, analyzing
%restricted Bolzmann machine~\cite{chen2017equivalence}, and deep
%neural networks~\cite{novikov2015tensorizing}.

%
%To compute TT decomposition, several optimization methods have been
%proposed. For example, tensor compression (i.e. tensor elements are
%fully observed) via iterative singular value decomposition
%(SVD)~\cite{oseledets2010tt} and tensor completion (i.e. tensor
%elements are partially missing) via alternating
%minimization~\cite{grasedyck2015alternating,wang2016tensor} or convex
%formulation~\cite{phien2016efficient}. \knote{Edit after confirming the time complexity of our methods}

Despite its potential importance, we face two crucial limitations when
applying this decomposition to a much wider class of machine learning problems.
First, its statistical performance is unknown. In Tucker decomposition
and its variants, many authors addressed the generalization error and
derived statistical bounds
(e.g. \cite{tomioka2011statistical,tomioka2013convex}). For example,
Tomioka \etal \cite{tomioka2011statistical} clarify the way in which using the convex
relaxation of Tucker decomposition, the generalization error is
affected by the rank (i.e., the dimensionalities of latent factors),
 dimension of an input, and number of observed elements. In
contrast, such a relationship has not been studied for TT decomposition
yet.
Second, standard TT decomposition algorithms, such as alternating
least squares (ALS)~\cite{grasedyck2015alternating,wang2016tensor} ,
require a huge computational cost. The main bottleneck arises from the
singular value decomposition (SVD) operation to an ``unfolding''
matrix, which is reshaped from the input tensor. The size of the
unfolding matrix is huge and the computational cost grows
exponentially in $K$. 

%Second, there are many rank parameters that are not easy to
%determine. This could be done by using a regularizer that surrogates
%the sum of ranks (e.g. \cite{tomioka2011statistical}). However, this
%convex relaxation incurs further computational cost, which is
%unrealistic to compute \knote{Show actual computational cost}.

In this paper, we tackle the above issues and present a scalable yet
statistically-guaranteed TT decomposition method. We first introduce a
convex relaxation of the TT decomposition problem and its optimization
algorithm via the alternating direction method of multipliers (ADMM).
Based on this, a statistical error bound for tensor completion is
derived, which achieves the same statistical efficiency as the convex
version of Tucker decomposition does. Next, because the ADMM algorithm
is not sufficiently scalable, we develop an alternative method by using a
randomization technique. At the expense of losing the global
convergence property, the dependency of $K$ on the time complexity is
reduced from exponential to quadratic. In addition, we show that a
similar error bound is still guaranteed.
In experiments, we numerically confirm the derived bounds and
empirically demonstrate the performance of our method using a real
higher-order tensor.

%%% Local Variables:
%%% mode: latex
%%% TeX-master: "TTcomp_NIPS2017.tex"
%%% End:

\section{Preliminaries}

\subsection{Notation}

Let $\mX \subset \R^{I_1 \times \cdots \times I_K}$ be the space of
order-$K$ tensors, where $I_k$ denotes the dimensionality of the $k$-th
mode for $k=1,\dots,K$.  For brevity, we define
$I_{<k} := \prod_{k'<k}I_{k'}$; similarly, $I_{\leq k}, I_{k<}$ and
$I_{k \leq}$ are defined.  For a vector $Y \in \R^d$, $[Y]_i$ denotes
the $i$-th element of $Y$.  Similarly, $[X]_{i_1,\ldots,i_K}$ denotes
the $(i_1,\ldots,i_K)$ elements of a tensor $X\in\mX$. Let
$[X]_{i_1,\ldots,i_{k-1},:,i_{k+1},\ldots,i_K}$ denote an
$I_k$-dimensional vector
$(X_{i_1,\ldots,i_{k-1},j,i_{k+1},\ldots,i_K})_{j=1}^{I_k}$ called the
mode-$k$ fiber.  For a vector $Y \in \R^d$, $\|Y\| = (Y^T Y)^{1/2}$
denotes the $\ell_2$-norm and $\|Y\|_{\infty} = \max_i|[Y]_i|$ denotes
the max norm.  For tensors $X,X' \in \mX$, an inner product is defined
as
$\langle X,X' \rangle := \sum_{i_1,\ldots,i_K =1}^{I_1 \dots I_K}
X(i_1,\ldots,i_K)X'(i_1,\ldots,i_K)$
and $\|X\|_{F} = \langle X,X \rangle^{1/2}$ denotes the Frobenius
norm.  For a matrix $Z$, $\|Z\|_s := \sum_{j} \sigma_{j}(Z)$ denotes
the Schatten-1 norm, where $\sigma_j(\cdot)$ is a $j$-th singular value
of $Z$.

\subsection{Tensor Train Decomposition}

%\textit{Tensor train (TT) decomposition} is a tensor factorization
%method with a matrix product representation
%\cite{oseledets2010tt,oseledets2011tensor}.  
Let us define a tuple of positive integers $(R_1, \ldots, R_{K-1})$
and an order-$3$ tensor $G_k \in \R^{I_k \times R_{k-1} \times R_k}$
for each $k = 1,\ldots,K$.  Here, we set $R_0 = R_K = 1$.  Then, TT
decomposition represents each element of $X$ as follows:
\begin{align}
	X_{i_1,\ldots,i_K} = [G_1]_{i_1,:,:} [G_2]_{i_2,:,:} \cdots [G_K]_{i_K,:,:}. \label{eq:tt}
\end{align}
Note that $[G_k]_{i_k,:,:}$ is an $R_{k-1} \times R_k$ matrix.  We
define $\mG := \{G_k\}_{k=1}^K$ as a set of the tensors, and let $X(\mG)$
be a tensor whose elements are represented by $\mG$ as
\eqref{eq:tt}.  The tuple $(R_1, \ldots, R_{K-1})$ controls
the complexity of TT decomposition, and it is called a \textit{Tensor
  Train (TT) rank}.  Note that TT decomposition is universal, i.e.,
any tensor can be represented by TT decomposition with sufficiently
large TT rank~\cite{oseledets2010tt}.

When we evaluate the computational complexity, we assume the shape of
$\mG$ is roughly symmetric. That is, we assume there exist
$I,R\in\mathbb{N}$ such that $I_k=O(I)$ for $k=1,\dots,K$ and
$R_k=O(R)$ for $k=1,\dots,K-1$.

\subsection{Tensor Completion Problem}

Suppose there exists a true tensor $X^* \in \mX$ that is unknown, and
a part of the elements of $X^*$ is observed with some noise.  Let
$S \subset \{(j_1,j_2,
\ldots,j_K)\}_{j_1,\ldots,j_K=1}^{I_1,\ldots,I_K}$
be a set of indexes of the observed elements and
$n := |S| \leq \prod_{k=1}^K I_k$ be the number of observations.  Let
$j(i)$ be an $i$-th element of $S$ for $i=1,\ldots,n$, and $y_i$
denote $i$-th observation from $X^*$ with noise.  We consider the
following observation model:
\begin{align}
	y_i = [X^*]_{j(i)} + \epsilon_i, \label{model:obs}
\end{align}
where $\epsilon_i$ is i.i.d. noise with zero mean and variance
$\sigma^2$.  For simplicity, we introduce  observation vector
$Y := (y_1, \ldots, y_n)$, noise vector
$\mE := (\epsilon_1, \ldots , \epsilon_n)$, and rearranging operator
$\mathfrak{X} : \mX \to \mathbb{R}^n$ that randomly picks the elements of $X$.
%  $[\mathfrak{X}(X)]_i = [X]_{j(i)}$.
Then, the model \eqref{model:obs} is rewritten as follows:
\begin{align*}
	Y = \mathfrak{X}(X^*) + \mE.
\end{align*}

%%%
The goal of tensor completion is to estimate the true tensor $X^*$
from the observation vector $Y$.  Because the estimation problem is
ill-posed, we need to restrict the degree of freedom of $X^*$, such as
rank. Because the direct optimization of rank is difficult, its convex
surrogation is alternatively
used~\cite{candes2012exact,candes2010matrix, krishnamurthy2013low,
  zhang2016exact, phien2016efficient}.  For tensor
completion, the convex surrogation yields the following optimization
problem
\cite{gandy2011tensor,liu2013tensor,signoretto2011tensor,tomioka2010estimation}:
\begin{align}
	\min_{X \in \Theta} \left[ \frac{1}{2n} \|Y - \mathfrak{X}(X)\|^2 + \lambda_n \|X\|_{s^*} \right], \label{opt:general}
\end{align}
where $\Theta \subset \mX$ is a convex subset of $\mX$, 
%and
%$\Omega : \Theta \to \R_+$ is a regularization for tensors, 
$\lambda_n\geq 0$ is a regularization coefficient, and
$ \|\cdot\|_{s^*}$ is the overlapped Schatten norm defined as
$ \|X\|_{s^*} := \frac{1}{K} \sum_{k=1}^K \|\tilde{X}_{(k)}\|_s$.
Here, $\tilde{X}_{(k)}$ is the $k$-unfolding matrix defined by
concatenating the mode-$k$ fibers of $X$.  The overlapped Schatten
norm regularizes the rank of $X$ in terms of Tucker
decomposition~\cite{negahban2011estimation, tomioka2011statistical}.
Although the Tucker rank of $X^*$ is unknown in general, the convex
optimization adjusts the rank depending on $\lambda_n$.

To solve the convex problem~\eqref{opt:general}, the ADMM algorithm is often
employed~\cite{boyd2011distributed,tomioka2010estimation,
  tomioka2011statistical}.  Since the overlapped Schatten norm is not
differentiable, the ADMM algorithm avoids the differentiation of the
regularization term by alternatively minimizing the augmented
Lagrangian function iteratively.

%%% Local Variables:
%%% mode: latex
%%% TeX-master: "TTcomp_NIPS2017.tex"
%%% End:

\section{Convex Formulation of TT Rank Minimization}

%\knote{Replace TT-rank by TT rank}

To adopt TT decomposition to the convex optimization problem as
\eqref{opt:general}, we need the convex surrogation of TT rank.  For
that purpose, we introduce the \textit{Schatten TT
  norm}~\cite{phien2016efficient} as follows:
\begin{align}
	\|X\|_{s,T} := \frac{1}{K-1} \sum_{k=1}^{K-1} \|Q_k(X)\|_{s} := \frac{1}{K-1}\sum_{k=1}^{K-1} \sum_{j} \sigma_j(Q_k(X)), \label{def:ttnorm}
\end{align}
where $Q_k : \mX \to \R^{I_{\leq k} \times I_{k<}}$ is a reshaping
operator that converts a tensor to a large matrix where the first $k$
modes are combined into the rows and the rest $K-k$ modes are combined
into the columns. Oseledets \etal \cite{oseledets2011tensor} shows that the matrix
rank of $Q_k(X)$ can bound the $k$-th TT rank of $X$, implying that the
Schatten TT norm surrogates the sum of the TT rank. Putting the
Schatten TT norm into \eqref{opt:general}, we obtain the following
optimization problem:
\begin{align}
	\min_{X \in \mX} \left[ \frac{1}{2n}\| Y - \mathfrak{X}(X) \|^2 + \lambda_n\|X\|_{s,T} \right]. \label{eq:prob}
\end{align}
%Here, $\lambda_n$ is a penalty coefficient.

\subsection{ADMM Algorithm}

To solve \eqref{eq:prob}, we consider the
augmented Lagrangian function
$L(x, \{Z_k\}_{k=1}^{K-1}, \{\alpha_k\}_{k=1}^{K-1})$,
%as
%\begin{align*}
%	&L(x, \{Z_k\}_{k=1}^{K-1}, \{\alpha_k\}_{k=1}^{K-1}) \\
%	&= \frac{1}{n}\|\tilde {\Omega}x - Y\|_F^2 + \lambda_n \sum_{k=1}^{K-1} \|Z_k\|_s  + \sum_{k=1}^{K-1} \left( \eta \alpha_k^T (x - \mbox{vec}Z_k) + \frac{\eta}{2}\|x - \mbox{vec}Z_k\|_F^2 \right),
%\end{align*}
where $x \in \mathbb{R}^{\prod_k I_k}$ is the
vectorization of $X$, $Z_k$ is a reshaped matrices with size
$I_{\leq k} \times I_{k<}$, and
$\alpha_k \in \mathbb{R}^{\prod_k I_k}$ is a coefficient for
constraints.  Given initial points
$(x^{(0)}, \{Z_k^{(0)}\}_k, \{\alpha_k^{(0)}\}_k)$, the $\ell$-th step
of ADMM is written as follows:
\begin{align*}
	&x^{(\ell + 1)} = \left( \tilde{\Omega}^T Y+ n \eta \frac{1}{K-1}\sum_{k=1}^{K-1} (V_k(Z_k^{(\ell)}) - \alpha_k^{(\ell)})\right) / (1 + n \eta K),\\
	&Z_k^{(\ell + 1)} = \mbox{prox}_{\lambda_n / \eta} (V_k^{-1}(x^{(\ell + 1)} + \alpha_k^{(\ell)})), ~~ k = 1,\ldots,K,\\
	& \alpha_k^{(\ell + 1)} = \alpha_k^{(\ell)} + (x^{(\ell + 1)} - V_k(Z_k^{(\ell + 1)})), ~~  k = 1,\ldots,K.
\end{align*}
Here, $\tilde{\Omega}$ is an $n \times \prod_{k=1}^{I_k}$ matrix that
works as the inversion mapping of $\mathfrak{X}$; 
%$./$ denotes an element-wise division; 
$V_k$ is a vectorizing operator of an $I_{\leq k} \times I_{k<}$
matrix; $\mbox{prox}(\cdot)$ is the shrinkage operation of the
singular values as $\mbox{prox}_{b}(W) = U \max\{S-bI,0\}V^T$, where
$USV^T$ is the singular value decomposition of $W$; $\eta>0$ is a
hyperparameter for a step size.  We stop the iteration when the
convergence criterion is satisfied (e.g. as suggested by
Tomioka \etal \cite{tomioka2011statistical}). Since the Schatten TT
norm~\eqref{def:ttnorm} is convex, the sequence of the variables of
ADMM is guaranteed to converge to the optimal solution (Theorem
5.1, \cite{gandy2011tensor}). We refer to this algorithm as \emph{TT-ADMM}.

TT-ADMM requires huge resources in terms of both time and space.  For
the time complexity, the proximal operation of the Schatten TT norm,
namely the SVD thresholding of $V_k^{-1}$, yields the dominant, which is
$O(I^{3K/2})$ time. For the space complexity, we have $O(K)$ variables
of size $O(I^K)$, which requires $O(KI^K)$ space.

%%% Local Variables:
%%% mode: latex
%%% TeX-master: "TTcomp_NIPS2017.tex"
%%% End:

\section{Alternating Minimization with Randomization}

We first consider reducing the space complexity of the ADMM
approach. The idea is simple: we only maintain the TT format of the
input tensor rather than the input tensor itself.  This leads the
following optimization problem:
\begin{align}
    \min_{\mG} \left[ \frac{1}{2n} \|Y - \mathfrak{X}(X(\mG))\|^2 + \lambda_n \| X(\mG)\|_{s,T} \right]. \label{opt:als0}
\end{align}
Remember that $\mG=\{G_k\}_k$ is the set of TT components and $X(\mG)$
is the tensor given by the TT format with $\mG$. Now we only need to
store the TT components $\mG$, which drastically improves the space
efficiency.

\subsection{Randomized TT Schatten norm}

Next, we approximate the optimization of the Schatten TT norm. To avoid
the computation of exponentially large-scale SVDs in the Schatten TT norm,
we employ a technique called the ``very sparse random
projection''~\cite{li2006very}. The main idea is that, if the size of a
matrix is sufficiently larger than its rank, then its singular values
(and vectors) are well preserved even after the projection by a sparse
random matrix. This motivates us to use the Schatten TT norm over the
random projection.

Preliminary, we introduce tensors for the random projection. Let
$D_1,D_2 \in \mathbb{N}$ be the size of the matrix after projection.
For each $k = 1,\ldots,K-1$ and parameters, let
$\Pi_{k,1} \in \R^{D_1 \times I_1 \times \cdots \times I_{k}}$ be a
tensor whose elements are independently and identically distributed as follows:
\begin{align}
    [\Pi_{k,1}]_{d_1,i_1, \ldots,i_k} = 
    \begin{cases}
        +\sqrt{s/d_1} \quad & \mbox{~with probability~}1/2s,\\
        0 \quad & \mbox{~with probability~}1-1/s,\\
        -\sqrt{s/d_1} \quad & \mbox{~with probability~}1/2s,        \label{def:pi}
    \end{cases}
\end{align}
for $i_1,\ldots,i_k$ and $d_1 = 1,\ldots,D_1$.
Here, $s>0$ is a hyperparameter controlling sparsity.
Similarly, we introduce a tensor
$\Pi_{k,2} \in \R^{D_2 \times I_{k+1} \times \cdots \times I_{K-1}}$ that is defined in the same way as $\Pi_{k,1}$.
With $\Pi_{k,1}$ and $\Pi_{k,2}$, let
$\mP_k : \mX \to \R^{D_1 \times D_2}$ be a random projection operator
whose element is defined as follows:
\begin{align}
    [\mP_k(X)]_{d_1,d_2} 
  &= \sum_{j_1=1}^{I_1}  \cdots \sum_{j_{K}=1}^{I_{K}} [\Pi_{k,1}]_{d_1,j_1,\ldots,j_{k}}  [X]_{j_1,\ldots,j_K}[\Pi_{k,2}]_{d_2,j_{k+1},\ldots,j_{K}}. \label{eq:random1} 
%  &=  \sum_{(j_1, \ldots, j_k) \in \pi^{(k)}_{1}} \sum_{(j_{k+1}, \ldots, j_K) \in \pi^{(k)}_{2}} [\Pi_{k,1}]_{d_1,j_1,\ldots,j_{k}}  [X]_{j_1,\ldots,j_K}[\Pi_{k,2}]_{d_2,j_{k+1},\ldots,j_{K}}. \label{eq:random1} 
%\\
%  &= \sum_{j_1=1}^{I_1}  \cdots \sum_{j_{K}=1}^{I_{K}} [\Pi_{k,1}]_{d_1,j_1,\ldots,j_{k}} [G_1]_{j_1}  \cdots [G_K]_{j_K} [\Pi_{k,2}]_{d_2,j_{k+1},\ldots,j_{K}}. \label{eq:random2}
\end{align}
%In the second line we used the fact that $X$ has the TT format
%$X = X(\mG)$. 
Note that we can compute the above projection by using the facts that
$X$ has the TT format and the projection matrices are sparse.  Let
$\pi^{(k)}_{j}$ be a set of indexes of non-zero elements of
$\Pi_{k,j}$. Then, using the TT representation of $X$,
\eqref{eq:random1} is rewritten as
\begin{align*}
  [\mP_k(X(\mG))]_{d_1,d_2}  
  =& \sum_{(j_1, \ldots, j_k) \in \pi^{(k)}_{1}} [\Pi_{k,1}]_{d_1,j_1,\ldots,j_{k}} [G_1]_{j_1}  \cdots [G_k]_{j_k}
\\
  &\sum_{(j_{k+1}, \ldots, j_K) \in \pi^{(k)}_{2}} [G_k]_{j_{k+1}} \cdots [G_K]_{j_K} [\Pi_{k,2}]_{d_2,j_{k+1},\ldots,j_{K}},
\end{align*}
If the projection matrices have only $S$ nonzero elements (i.e.,
$S = |\pi^{(1)}_j|= |\pi^{(2)}_j|$), the computational cost of the
above equation is $O(D_1D_2SKR^3)$.

The next theorem guarantees that the Schatten-1 norm of $\mP_k(X)$
approximates the original one.
\begin{theorem} \label{thm:random} Suppose $X\in\mX$ has TT rank
  $(R_1, \ldots,R_k)$.  Consider the reshaping operator $Q_k$ in
  \eqref{def:ttnorm}, and the random operator $\mP_k$ as
  \eqref{eq:random1} with tensors $\Pi_{k,1}$ and $\Pi_{k,2}$ defined
  as \eqref{def:pi}.  
  If $D_1, D_2 \ge \max\{ R_k, 4 (\log(6 R_k) + \log(1 / \epsilon))/\epsilon^2 \}$, and all the singular vectors $u$ of $Q(X)_k$ are well-spread as
  $ \sum_j |u_j|^3 \le \epsilon/(1.6 k \sqrt{s})$, we have
  \begin{align*}
      \frac{1-\epsilon}{R_k} \|Q_k(X)\|_{s} \leq \| \mP_k (X)\|_{s} \leq (1+\epsilon) \|Q_k(X)\|_{s},
  \end{align*}
  with probability at least $1 - \epsilon$.

\if0
    \textcolor{red}{ [OLD]
    Then, for any $k$ and $\epsilon > 0$, with probability at least [TBW] with a constant $c_{p} > 0$, the following relation holds
    \begin{align*}
        (1-\epsilon)^{1/2}\|Q_k(X)\|_{s} \leq \| \mP_k (X)\|_{s} \leq (1+\epsilon)^{1/2}\|Q_k(X)\|_{s},
    \end{align*}
    when $D_1 \geq R_k$ and $D_2 \geq R_k$ are satisfied.
    }
    \fi
\end{theorem}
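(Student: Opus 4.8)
The plan is to recast the randomized projection as a two-sided matrix sketch of the unfolding $Q_k(X)$ and to reduce everything to the statement that the sparse projections preserve the norms and mutual inner products of its singular vectors. Flattening $\Pi_{k,1}$ into a $D_1\times I_{\le k}$ matrix and $\Pi_{k,2}$ into a $D_2\times I_{k<}$ matrix, definition \eqref{eq:random1} becomes $\mP_k(X)=\Pi_{k,1}\,Q_k(X)\,\Pi_{k,2}^T$. Writing the thin SVD $Q_k(X)=U\Sigma V^T$ with $U\in\R^{I_{\le k}\times R_k}$ and $V\in\R^{I_{k<}\times R_k}$ having orthonormal columns and $\Sigma=\mathrm{diag}(\sigma_1,\dots,\sigma_{R_k})$, this reads $\mP_k(X)=\sum_{i=1}^{R_k}\sigma_i\,(\Pi_{k,1}u_i)(\Pi_{k,2}v_i)^T$, so the projected problem is governed entirely by the vectors $\Pi_{k,1}u_i$ and $\Pi_{k,2}v_i$.

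The core step, and the main obstacle, is the concentration estimate: with probability at least $1-\epsilon$, every projected singular vector satisfies $\|\Pi_{k,1}u_i\|,\|\Pi_{k,2}v_i\|\in[\sqrt{1-\epsilon},\sqrt{1+\epsilon}]$ and every off-diagonal inner product $\langle\Pi_{k,1}u_i,\Pi_{k,1}u_j\rangle$ (resp.\ for $v$) stays within $\epsilon$ of the true value $\langle u_i,u_j\rangle=0$. Since the sparse projection yields an unbiased estimator of each inner product, I would invoke the very sparse random projection analysis of Li \etal \cite{li2006very}; the hypothesis $\sum_j|u_j|^3\le\epsilon/(1.6k\sqrt{s})$ is precisely what makes the third-moment (Berry--Esseen) term of the sparse estimator negligible, so each sparse inner product concentrates like its dense Gaussian counterpart. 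The dimension bound $D_1,D_2\ge 4(\log(6R_k)+\log(1/\epsilon))/\epsilon^2$ then yields a sub-Gaussian tail that survives a union bound over the singular vectors (reflected in the $\log(6R_k)$ factor), while $D_1,D_2\ge R_k$ keeps the sketches full rank. Turning the asymptotic-normality statement of \cite{li2006very} into this explicit finite-sample tail with the stated constants is where the real work lies.

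Granting this, the two bounds follow quickly. For the upper bound I would apply subadditivity of the Schatten-$1$ norm to the rank-one expansion above, using $\|ab^T\|_s=\|a\|\,\|b\|$, to get $\|\mP_k(X)\|_s\le\sum_i\sigma_i\,\|\Pi_{k,1}u_i\|\,\|\Pi_{k,2}v_i\|\le(1+\epsilon)\sum_i\sigma_i=(1+\epsilon)\|Q_k(X)\|_s$; this needs only the norm-preservation estimates and loses nothing. For the lower bound I would use $\|\mP_k(X)\|_s\ge\sigma_1(\mP_k(X))$ (the nuclear norm dominates the operator norm) and lower-bound the top singular value by testing $\mP_k(X)$ against the projected top singular vectors $\Pi_{k,1}u_1$ and $\Pi_{k,2}v_1$: the diagonal term contributes $(1-\epsilon)\sigma_1$ and the off-diagonal Gram estimates control the cross terms. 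Finally $\sigma_1(Q_k(X))\ge\frac{1}{R_k}\sum_i\sigma_i=\frac{1}{R_k}\|Q_k(X)\|_s$ because the largest singular value dominates the average, giving $\|\mP_k(X)\|_s\ge\frac{1-\epsilon}{R_k}\|Q_k(X)\|_s$. The factor $1/R_k$ is exactly the price of bounding the nuclear norm below by a single singular value, which is why the lower bound is loose while the upper bound is tight.
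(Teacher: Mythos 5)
Your overall architecture matches the paper's: flatten the projections so that $\mP_k(X)=\Pi_{k,1}Q_k(X)\Pi_{k,2}^T$, use the Berry--Esseen-type bound of Li \etal\ (Lemma~4 of \cite{li2006very}) together with the well-spread condition to get finite-sample concentration of $\|\Pi u\|^2$ for each singular vector, union-bound over the $2R_k$ singular vectors (this is where the $\log(6R_k)+\log(1/\epsilon)$ term and the condition on $D_1,D_2$ come from), and then convert preservation of the singular vectors into preservation of the Schatten-$1$ norm. Your upper bound is exactly the argument behind the lemma the paper cites (Theorem~1 of Mu \etal~\cite{mu2011accelerated}): subadditivity of the nuclear norm over the rank-one expansion plus $\|\Pi u_i\|\le\sqrt{1+\epsilon}$, applied once per side.

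The lower bound is where your route diverges and where there is a genuine gap. You bound $\|\mP_k(X)\|_s\ge\sigma_1(\mP_k(X))$ and test against $\Pi_{k,1}u_1$ and $\Pi_{k,2}v_1$. The diagonal term actually gives $(1-\epsilon)^2\sigma_1$ rather than $(1-\epsilon)\sigma_1$, but the real problem is the cross terms: even granting $|\langle\Pi u_1,\Pi u_i\rangle|\le\epsilon$ for all $i\neq 1$ (itself an extra probabilistic event requiring a union bound over $O(R_k^2)$ pairs, which the stated $D_1,D_2$ condition does not account for), they accumulate to $\epsilon^2\sum_{i\neq 1}\sigma_i\le\epsilon^2 R_k\sigma_1$, which is not dominated by $\epsilon\sigma_1$ unless $\epsilon R_k=O(1)$. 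So your argument yields at best a constant of the form $(1-2\epsilon-\epsilon^2R_k)/R_k$, not the claimed $(1-\epsilon)/R_k$. The paper (via Mu \etal) avoids this entirely by going through the Frobenius norm: $\|\Pi Z\|_s\ge\|\Pi Z\|_F$ and $\|\Pi Z\|_F^2=\sum_i\sigma_i^2\|\Pi u_i\|^2\ge(1-\epsilon)\|Z\|_F^2$ holds \emph{exactly}, because the singular vectors on the untouched side remain orthonormal and all cross terms vanish identically; combined with $\|Z\|_F\ge\|Z\|_s/\sqrt{R_k}$ this gives $\sqrt{(1-\epsilon)/R_k}\,\|Z\|_s\le\|\Pi Z\|_s$ per side, hence the factor $(1-\epsilon)/R_k$ after applying it to each of the two projections, using only norm preservation of individual singular vectors and no inner-product estimates. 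Replacing your operator-norm step by this Frobenius-norm step repairs the proof and removes the need for the pairwise Gram control.
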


Note that the well-spread condition can be seen as a stronger version of the incoherence assumption which will be discussed later.
%since $\sum_i |u_i|^3 \le C$ for all singular vector $u$ implies $\| P_U e_i \|_2 C^{1/4}$ for all $i$.

\subsection{Alternating Minimization}

%Preliminarily, we consider $X$ with TT decomposition as a function of $G_k$. 
%Namely, we fix $\mG \backslash \{G_k\}$ and define a tensor as
%\begin{align*}
%    X_k(\mG) := X(G_k ; \mG \backslash \{G_k\}).
%\end{align*}
%Then, the optimization problem \eqref{opt:als} is rewritten as
%\begin{align}
%    \min_{G_k} \left[ \frac{1}{2n}\|Y - \mathfrak{X}(X_k(\mG))\|^2 + \lambda_n \| X_k(\mG)\|_{s,T} \right], k = 1,\ldots, K. \label{opt:als}
%\end{align}
%By the setting of TT decomposition, $G_k$ and it is a $3$-way tensor with size $I_k \times R_{k-1} \times R_k$.
%Thus, we require only a small size of computational memory to handle $G_k$ as the control variable.

% $\mG \backslash \{G_k\} := \{G_{k'}\}_{k'\not=k}$

Note that the new problem \eqref{opt:als0} is non-convex because
$X(\mG)$ does not form a convex set on $\mX$.  However, if we fix
$\mG$ except for $G_k$, it becomes convex with respect to
$G_k$. Combining with the random projection, we obtain the following
minimization problem:
\begin{align}
    \min_{G_k} \left[ \frac{1}{2n} \|Y - \mathfrak{X} (X(\mG))\|^2 + \frac{\lambda_n}{K-1} \sum_{k'=1}^{K-1} \|\mP_{k'}(X (\mG))\|_{s} \right]. \label{opt:als2}
\end{align}
We solve this by the ADMM method for each $k = 1,\ldots,K$.  Let
$g_k \in \R^{I_kR_{k-1}R_k}$ be the vectorization of $G_k$, and
$W_{k'}\in \R^{D_1 \times D_2}$ be a matrix for the randomly projected matrix.
The augmented Lagrangian function is then given by
$L_k(g_k, \{W_{k'}\}_{k'=1}^{K-1}, \{\beta_{k'}\}_{k'=1}^{K-1})$,
%\begin{align*}
%    &L_k(g_k, \{W_{k'}\}_{k'=1}^{K-1}, \{\beta_{k'}\}_{k'=1}^{K-1}) \\
%    &= \frac{1}{2n} \|Y - \Omega g_k \|^2 + \lambda_n\sum_{k'=1}^{K-1}  \|Z_{k'}\|_{s^*}  + \sum_{k'=1}^{K-1} \left\{ \beta_{k'}^{T} (\Gamma_{k'} g_k - \mbox{vec}Z_{k'}) + \frac{1}{2} \|\Gamma_{k'} g_k - \mbox{vec}Z_{k'}\|^2 \right\},
%\end{align*}
where $\{\beta_{k'} \in \R^{D_1D_2} \}_{k'=1}^{K-1}$ are the Lagrange
multipliers.  Starting from initial points
$(g_k^{(0)}, \{W_{k'}^{(0)}\}_{k'=1}^{K-1},
\{\beta_{k'}^{(0)}\}_{k'=1}^{K-1})$,
the $\ell$-th ADMM step is written as follows:
\begin{align*}
    &g_k^{(\ell + 1)} = \left( \Omega^T \Omega / n + \eta \sum_{k'=1}^{K-1} \Gamma_{k'}^{T}\Gamma_{k'} \right)^{-1}  \left( \Omega^T Y / n + \frac{1}{K-1}\sum_{k'=1}^{K-1} \Gamma_{k'}^{T}( \eta \tilde{V}_k(W_{k'}^{(\ell)}) - \beta_{k'}^{(\ell)} )  \right) ,\\
    &W_{k'}^{(\ell + 1)} = \mbox{prox}_{\lambda_n / \eta} \left( \tilde{V}_k^{-1} ( \Gamma_{k'} g_k^{(\ell+1)} +\beta_{k'}^{(\ell)} ) \right),~~k'=1,\ldots,K-1, \\
    &  \beta_{k'}^{(\ell + 1)} = \beta_{k'}^{(\ell)} + (\Gamma_{k'} g_k^{(\ell + 1)} - \tilde{V}_k(W_{k'}^{(\ell + 1)})),~~k'=1,\ldots,K-1.
\end{align*}
Here, $\Gamma^{(k)} \in \R^{D_1 D_2 \times I_kR_{k-1}R_k}$ is the matrix
imitating the mapping
$G_k\mapsto\mP_k(X(G_k;\mG \backslash \{G_k\}))$, $\tilde{V}_k$ is a vectorizing operator of $D_1 \times D_2$
matrix, and $\Omega $ is an $n \times I_kR_{k-1}R_k$
matrix of the operator
$\mathfrak{X} \circ X( \cdot ; \mG \backslash \{G_k\})$ with respect
to $g_k$.
Similarly to the convex approach, we iterate the ADMM steps until
convergence. We refer to this algorithm as \emph{TT-RALS}, where RALS stands for randomized ALS.

%The time and space complexities are as follows. To update
%$g_k^{(\ell)}$, we need $O(I^3R^6)$ time for the inversion of the
%$IR^2 \times IR^2$ matrix. The SVD thresholding requires $O(D^3SR^3)$
%time.
%
%$O(nK^2IR^2)$

The time complexity of TT-RALS at the $\ell$-th iteration is
$O((n + KD^2)KI^2R^4)$; the details are deferred to
Supplementary material.
%
%, computing the parameters requires $O(K(n+KD^2)I^2R^4)$,
%inverting an $IR^2 \times IR^2$ matrix for updating $g_k^{(\ell)}$ for
%all $k$ requires $O(KI^3R^6)$, and conducting the random projection
%and thresholding its requires $O(D^3SKR^3)$ times.  
%
The space complexity is $O(n + KI^2R^4)$, where $O(n)$ is for $Y$ and
$O(KI^2R^4)$ is for the parameters.

%\knote{Describe time and space complexities of TT-RALS}

%%% Local Variables:
%%% mode: latex
%%% TeX-master: "TTcomp_NIPS2017.tex"
%%% End:

\section{Theoretical Analysis}

In this section, we investigate how the TT rank and the number of
observations affect to the estimation error. Note that all the proofs
of this section are deferred to Supplementary material.

\subsection{Convex Solution}

To analyze the statistical error of the convex problem~\eqref{eq:prob}, we assume the
\textit{incoherence} of the reshaped version of $X^*$.
\begin{assumption}{(Incoherence Assumption)}\label{asmp:incoherence_convex}
There exists $k \in \{1,\ldots,K\}$ such that a matrix $Q_k(X^*)$ has orthogonal singular vectors $\{ u_r \in \R^{I_{\leq k}}, v_r \in  \R^{I_{k<}}\}_{r = 1}^{R_k}$ satisfying
	\begin{align*}
          \max_{1 \leq i \leq I_{<k}}\|P_{U}e_{i}\| \leq ( \mu R_k / I_{\leq k})^{\frac{1}{2}}
          \quad \text{and} \quad
          \max_{1 \leq i \leq I_{<k}}\|P_{V}e_{i}\| \leq ( \mu R_k / I_{k<})^{\frac{1}{2}}
	\end{align*}
	with some $0 \leq \mu < 1$.
	Here, $P_U$ and $P_V$ are linear projections onto spaces spanned by $\{u_r\}_r$ and $\{v_r\}_r$; $\{e_i\}_i$ is the natural basis.
\end{assumption}
Intuitively, the incoherence assumption requires that the singular
vectors for the matrix $Q_k(X^*)$ are well separated.
This type of assumption is commonly used in the matrix and tensor
completion studies~\cite{candes2012exact,candes2010matrix,
  zhang2016exact}.
Under the incoherence assumption, the error rate of the solution of \eqref{eq:prob} is derived.
\begin{theorem} \label{thm:convex} Let
  $X^*\in\mX$ be a true tensor with TT rank $(R_1,\dots,R_{K-1})$, and let
  $\hat{X}\in\mX$ be the minimizer of \eqref{opt:general}.  Suppose that
  $\lambda_n \geq \|\mathfrak{X}^*(\mE)\|_{\infty}/n$ and that Assumption
  \ref{asmp:incoherence_convex} for some $k' \in \{1,2,\ldots,K\}$ is
  satisfied.  If
	\begin{align*}
		n \geq C_{m'} \mu_{k'}^2 \max\{I_{\leq k'}, I_{k'<}\} R_{k'} \log^3  \max\{I_{\leq k'}, I_{k'<}\}
	\end{align*}
	with a constant $C_{m'}$, then with probability at least $1-( \max\{I_{\leq k'}, I_{k'<}\})^{-3}$ and with a constant $C_X$,
	\begin{align*}
		\|\hat{X} - X^*\|_F \leq  C_{X} \frac{\lambda_n}{K}\sum_{k=1}^{K-1} \sqrt{R_k }.
	\end{align*}
\end{theorem}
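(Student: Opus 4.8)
The plan is to follow the standard template for establishing error bounds for convex regularized estimators, as developed by Negahban–Wainwright and applied to the tensor/matrix completion setting (e.g. Tomioka et al.). The minimizer $\hat X$ satisfies the basic inequality obtained from optimality: since $\hat X$ minimizes the objective, comparing its value to that at $X^*$ and using $Y = \mathfrak{X}(X^*)+\mE$ yields
\begin{align*}
\frac{1}{2n}\|\mathfrak{X}(\hat X - X^*)\|^2 \leq \frac{1}{n}\langle \mE, \mathfrak{X}(\hat X - X^*)\rangle + \lambda_n\big(\|X^*\|_{s,T} - \|\hat X\|_{s,T}\big).
\end{align*}
The condition $\lambda_n \geq \|\mathfrak{X}^*(\mE)\|_\infty/n$ lets me dominate the noise term by a multiple of $\lambda_n$ times a suitable norm of the error $\Delta := \hat X - X^*$.

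**Next I would** decompose the Schatten TT norm term. Writing $\|X\|_{s,T} = \frac{1}{K-1}\sum_k \|Q_k(X)\|_s$, I work mode by mode. For each $k$ I split $\Delta$ into a component lying in the subspace determined by the singular vectors of $Q_k(X^*)$ (the ``on-support'' part $\Delta'$) and its orthogonal complement ($\Delta''$), using the projections $P_U, P_V$. The triangle inequality and the decomposability of the Schatten-1 norm give $\|Q_k(\hat X)\|_s - \|Q_k(X^*)\|_s \geq \|\Delta''_{(k)}\|_s - \|\Delta'_{(k)}\|_s$, which after combining with the basic inequality yields a \emph{cone condition}: the off-support part $\|\Delta''\|$ is bounded by a constant times the on-support part $\|\Delta'\|$, and the rank of the on-support part is controlled by $R_k$. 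This is the step where the bound $\sum_k \sqrt{R_k}$ enters, since the on-support Schatten-1 norm is bounded by $\sqrt{\mathrm{rank}}$ times the Frobenius norm.

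**The main obstacle** will be establishing a \emph{restricted strong convexity} (RSC) inequality, i.e. lower-bounding $\frac{1}{2n}\|\mathfrak{X}(\Delta)\|^2$ by a multiple of $\|\Delta\|_F^2$ uniformly over the cone of admissible errors. Because $\mathfrak{X}$ samples only $n \ll \prod_k I_k$ entries, $\frac{1}{n}\|\mathfrak{X}(\Delta)\|^2$ can be far smaller than $\|\Delta\|_F^2$ for spiky $\Delta$; this is precisely why the incoherence Assumption~\ref{asmp:incoherence_convex} and the sample-size lower bound on $n$ are needed. I would invoke a concentration/peeling argument (analogous to Negahban–Wainwright's restricted strong convexity result for noisy matrix completion, specialized to the unfolding $Q_{k'}$ satisfying incoherence) to show that with probability at least $1-(\max\{I_{\leq k'},I_{k'<}\})^{-3}$, the RSC condition holds for the chosen mode $k'$ once $n \gtrsim \mu_{k'}^2 \max\{I_{\leq k'}, I_{k'<}\} R_{k'}\log^3(\cdots)$. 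The incoherence ensures the error $\Delta$ cannot concentrate on few entries, so the empirical norm faithfully reflects $\|\Delta\|_F$.

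**Finally I would** combine the pieces: plug the RSC lower bound into the left side of the basic inequality, use the cone condition and the $\sqrt{R_k}$ control on the on-support norms to bound the right side, and solve the resulting quadratic inequality in $\|\Delta\|_F$. This produces $\|\hat X - X^*\|_F \leq C_X \frac{\lambda_n}{K}\sum_{k=1}^{K-1}\sqrt{R_k}$, with $C_X$ absorbing the RSC constant. The slightly unusual normalization by $K$ rather than $K-1$ and the appearance of all modes in the sum (despite incoherence holding for a single mode $k'$) suggest the argument extracts strong convexity from the single well-behaved unfolding $k'$ while still paying the regularization cost across every mode; I would keep track of these constants carefully in the combination step.
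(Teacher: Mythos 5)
Your proposal follows essentially the same route as the paper's proof: the basic inequality from optimality, a H\"older/dual bound on the noise term using the choice of $\lambda_n$, a rank-$O(R_k)$ cone/decomposability argument (the paper cites Lemma~1 of Negahban--Wainwright for exactly the on-support/off-support splitting you describe) to convert each Schatten-1 norm into $\sqrt{R_k}\,\|\cdot\|_F$, and a restricted-strong-convexity step for the single incoherent unfolding $k'$ to lower-bound $\frac{1}{n}\|\mathfrak{X}(\Delta)\|^2$ by a multiple of $\|\Delta\|_F^2$ before solving the resulting inequality. The only cosmetic difference is that the paper obtains that last step by directly invoking Theorem~7 of Cand\`es--Plan for matrix completion under incoherence rather than a Negahban--Wainwright-style peeling argument, but the two tools play an identical role.
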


Theorem \ref{thm:convex} states that the bound for the statistical
error gets larger as the TT rank increases.  In other words,
completing a tensor is relatively easy as long as the tensor has small TT
rank.  Also, when we set $\lambda_n \to 0$ as $n$ increases, we can
state the consistency of the minimizer.

The result of Theorem \ref{thm:convex} is similar to that obtained from the studies on
matrix completion \cite{candes2010matrix, negahban2011estimation} and
tensor completion with the Tucker decomposition or SVD
\cite{tomioka2011statistical, zhang2016exact}.  Note that, although
Theorem~\ref{thm:convex} is for tensor completion, the result can
easily be generalized to other settings such as the tensor recovery or
the compressed sensing problems.

\subsection{TT-RALS Solution} \label{sec:theory}

%Let $\mG^{t}$ be the set of $G_k$ after repeating the TT-RALS steps
%for $t$ times.  When the update of $\mG^t$ is sufficiently small, we
%define $\hat{\mG} := \mG^t$ as the estimator of $\mG^*$ and
%$X(\hat{\mG})$ as the estimator of $X^*$ by TT-RALS.

Prior to the analysis, let $\mG^*$ be the true TT components such that
$X^* = X(\mG^*)$.  For simplification, we assume that the elements of
$\mG^*$ are normalized, i.e., $\|G_k\|=1, \forall k$, and an
$R_k \times I_{k-1}I_k$ matrix reshaped from $G_k^*$ has a $R_k$ row
rank.

In addition to the incoherence property (Assumption \ref{asmp:incoherence_convex}), we introduce an additional assumption on the initial point of the ALS iteration.

\begin{assumption}{(Initial Point Assumption)}\label{asmp:intial}
    Let $\mG^{\text{init}} := \{G_k^{\text{init}}\}_{k=1}^K$ be the initial point of the ALS iteration procedure.
    Then, there exists a finite constant $C_{\gamma}$ that satisfies
    \begin{align*}
        \max_{k \in \{1,\ldots,K\}} \|G_k^{\text{init}} - G_k^*\|_F \leq C_{\gamma}.
    \end{align*}
\end{assumption}

Assumption~\ref{asmp:intial} requires that the initial point is
sufficiently close to the true solutions $\mG^*$.  Although the ALS
method is not guaranteed to converge to the global optimum in general,
Assumption~\ref{asmp:intial} guarantees the
convergence to the true solutions~\cite{suzuki2016minimax}. Now we can evaluate the error
rate of the solution obtained by TT-RALS.
%This assumption is used in
%the field of ALS \cite{suzuki2016minimax}.

\begin{theorem} \label{thm:als} Let $X(\mG^*)$ be the true tensor
  generated by $\mG^*$ with TT rank $(R_1,\dots,R_{K-1})$, and
  $\hat{\mG} = \mG^t$ be the solution of TT-RALS at the $t$-th 
  iteration.  Further, suppose that Assumption~\ref{asmp:incoherence_convex} for some $k' \in \{1,2,\ldots,K\}$ and
  Assumption~\ref{asmp:intial} are satisfied, and suppose that Theorem \eqref{thm:random} holds with $\epsilon > 0$ for $k=1,\dots,K$.  Let $C_m,C_A,C_B > 0$ be $0 < \chi < 1$ be some
  constants.  If
    \begin{align*}
        n \geq C_{m} \mu_{k'}^2 R_{k'} \max\{I_{\leq k'}, I_{k'<}\} \log^3  \max\{I_{\leq k'}, I_{k'<}\},
    \end{align*}
    and the number of iterations $t$ satisfies
        $t \geq (\log \chi)^{-1}\{\log ( C_B  \lambda_n K^{-1}(1+\epsilon)\sum_{k} \sqrt{R_{k}} ) - \log C_{\gamma} \}$,
     then with probability at least $1-\epsilon( \max\{I_{\leq k'}, I_{k'<}\})^{-3}$ and for $\lambda_n \geq \|\mathfrak{X}^*(\mE)\|_{\infty}/n$,
    \begin{align}\label{eq:als-bound}
        \|X(\hat{\mG}) - X(\mG^*)\|_F \leq C_A  (1+\epsilon) \lambda_n \sum_{k=1}^{K-1} \sqrt{R_{k}}.
    \end{align}
\end{theorem}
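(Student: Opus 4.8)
The plan is to split the total error $\|X(\hat{\mG}) - X(\mG^*)\|_F$ into a \emph{statistical} part and an \emph{optimization} part. Let $\mG^{\infty}$ denote the fixed point to which the randomized ALS iteration converges, that is, a coordinate-wise optimum of the randomized objective (the projected counterpart of \eqref{opt:als0}, whose per-core subproblem is \eqref{opt:als2}). The triangle inequality then gives
\begin{align*}
\|X(\mG^t) - X(\mG^*)\|_F \leq \underbrace{\|X(\mG^t) - X(\mG^{\infty})\|_F}_{\text{optimization error}} + \underbrace{\|X(\mG^{\infty}) - X(\mG^*)\|_F}_{\text{statistical error}}.
\end{align*}
I would bound each term separately and then show that the iteration-count hypothesis forces the optimization error below the statistical error, so that both are of order $\lambda_n(1+\epsilon)\sum_k\sqrt{R_k}$.

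For the statistical term I would replay the analysis behind Theorem \ref{thm:convex}, but with the randomized Schatten TT norm $\frac{1}{K-1}\sum_{k'}\|\mP_{k'}(\cdot)\|_{s}$ in place of $\|\cdot\|_{s,T}$. The argument is the standard regularized-$M$-estimator chain: start from the basic inequality implied by optimality of $\mG^{\infty}$, substitute $Y = \mathfrak{X}(X(\mG^*)) + \mE$, control the noise cross term $\tfrac1n\langle \mathfrak{X}^*(\mE), \Delta\rangle$ via $\lambda_n \geq \|\mathfrak{X}^*(\mE)\|_{\infty}/n$ and the dual-norm inequality, invoke decomposability of the Schatten norm on the low-TT-rank target, and finally lower bound $\tfrac1n\|\mathfrak{X}(\Delta)\|^2$ by $\|\Delta\|_F^2$ through a restricted-strong-convexity estimate that holds with probability $1-(\max\{I_{\le k'},I_{k'<}\})^{-3}$ once $n \geq C_m \mu_{k'}^2 R_{k'}\max\{I_{\le k'},I_{k'<}\}\log^3\max\{I_{\le k'},I_{k'<}\}$ under Assumption \ref{asmp:incoherence_convex}. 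Theorem \ref{thm:random} enters here: on its success event every Schatten-norm quantity in the chain is pinched between $(1-\epsilon)$ and $(1+\epsilon)$ multiples of the corresponding un-projected quantity, so the resulting bound is the convex bound inflated by a $(1+\epsilon)$ factor, yielding $\|X(\mG^{\infty})-X(\mG^*)\|_F \leq C\,(1+\epsilon)\lambda_n\sum_k\sqrt{R_k}$.

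For the optimization term I would use Assumption \ref{asmp:intial} together with the local contraction property of alternating minimization. With all but one core fixed, \eqref{opt:als2} is strongly convex in $G_k$; the normalization of $\mG^*$ and the full-row-rank assumption on the reshaped cores control the conditioning of the operator $\Gamma^{(k)}$, and Theorem \ref{thm:random} guarantees the projection perturbs this conditioning only by the $(1+\epsilon)$ factors. Following the local-convergence theory of alternating minimization (cf.\ \cite{suzuki2016minimax}), one full sweep contracts the parameter distance to the fixed point by a factor $\chi\in(0,1)$, hence $\mathrm{dist}(\mG^t,\mG^{\infty})\leq \chi^t\,\mathrm{dist}(\mG^0,\mG^{\infty})\leq \chi^t C_{\gamma}$ by Assumption \ref{asmp:intial} and the triangle inequality. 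A Lipschitz transfer converts parameter distance to tensor distance: perturbing one normalized core by $\delta$ changes $X(\mG)$ in Frobenius norm by $O(\delta)$, with the constant absorbed into $C_B$. The iteration hypothesis is precisely $\chi^t C_{\gamma} \leq C_B\lambda_n K^{-1}(1+\epsilon)\sum_k\sqrt{R_k}$ rearranged through $\log\chi<0$, so the optimization error is dominated by the statistical error. Adding the two bounds and collecting constants into $C_A$ gives \eqref{eq:als-bound}, valid on the intersection of the random-projection event and the restricted-strong-convexity event, whose probability is at least $1-\epsilon(\max\{I_{\le k'},I_{k'<}\})^{-3}$.

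The main obstacle is the optimization term: establishing that each ALS sweep is genuinely a contraction with a well-controlled rate $\chi$. This is where non-convexity bites, since one must show that within the radius $C_{\gamma}$ guaranteed by Assumption \ref{asmp:intial} the per-core subproblems remain uniformly well-conditioned, so the block Gauss--Seidel map is a contraction, and that the random projection does not degrade this conditioning beyond the $(1+\epsilon)$ factors. Making the Lipschitz transfer between the cores $\mG$ and the assembled tensor $X(\mG)$ quantitative and uniform over all iterates is the other delicate point, because the product structure of the TT format can let that constant grow with $K$ unless the normalization and rank assumptions are used carefully.
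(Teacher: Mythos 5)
Your decomposition into an optimization error (distance to a fixed point $\mG^{\infty}$ of the randomized ALS) plus a statistical error (distance from $\mG^{\infty}$ to $\mG^*$) is genuinely different from the paper's route, and the statistical half of it has a real gap. The regularized-$M$-estimator chain you invoke for $\|X(\mG^{\infty})-X(\mG^*)\|_F$ starts from the basic inequality, which requires that the estimator achieve an objective value no larger than that of the truth. A block-coordinate fixed point of the non-convex problem \eqref{opt:als0} is only a coordinate-wise optimum; there is no guarantee that its objective value is below that of $\mG^*$, so the basic inequality, and with it the entire decomposability/dual-norm/RSC chain, cannot be applied to $\mG^{\infty}$. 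Relatedly, the existence and uniqueness of such a fixed point, and the claim that one ALS sweep contracts the distance \emph{to that fixed point} at a uniform rate $\chi$, are asserted but not established; strong convexity of each subproblem gives you that each block update is exactly solvable, not that the block Gauss--Seidel map is a contraction.

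The paper avoids both issues by never introducing a fixed point. Its Lemma~\ref{lem:als_1} analyzes a single per-core update: since $\hat{G}_k$ \emph{is} the global minimizer of the convex subproblem in $G_k$ with the other cores frozen at $\tilde\mG$, the basic inequality is legitimately applied against $G_k^*$ (with the other cores still at $\tilde\mG$), and the mismatch between $\tilde\mG$ and $\mG^*$ produces explicit cross terms $T_1,T_2,T_3$ bounded by $(K-1)C_K\,\Xi(\tilde\mG)/n$, where $\Xi(\tilde\mG)=\max_k\|\tilde G_k-G_k^*\|_F$ is the distance to the \emph{true} cores. This yields a recursion of the form $\Xi(\mG^t)\le \chi\,\Xi(\mG^{t-1})+b$ with $b\asymp \lambda_n(2+\epsilon)\sum_{k}\sqrt{R_k}$, i.e., contraction toward the truth up to a statistical floor, which is then unrolled using Assumption~\ref{asmp:intial} and converted to the tensor scale via $\|X(\tilde\mG)-X(\mG^*)\|_F^2\le (K+K^2)C_K^2\,\Xi^2(\tilde\mG)$. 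Your Lipschitz-transfer concern in the last paragraph is handled there by the constants $c_k$ bounding $\|A\times_k G_k^*\|_F/\|A\|_F$, which exist because the reshaped $G_k^*$ have full row rank. If you want to salvage your outline, the fix is to replace $\mG^{\infty}$ by $\mG^*$ throughout and prove the one-sweep inequality directly, which is exactly the paper's Lemma~\ref{lem:als_1}.
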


Again, we can obtain the consistency of TT-RALS by setting
$\lambda_n \to 0$ as $n$ increases.  Since the setting of $\lambda_n$
corresponds to that of Theorem \ref{thm:convex}, the speed of
convergence of TT-RALS in terms of $n$ is equivalent to the speed of
TT-ADMM.

By comparing with the convex approach (Theorem~\ref{thm:convex}), the
error rate becomes slightly worse.  Here, the term
$\lambda_n \sum_{k=1}^{K-1} \sqrt{R_{k}}$ in \eqref{eq:als-bound}
comes from the estimation by the alternating minimization, which
linearly increases by $K$.  This is because there are $K$ optimization
problems and their errors are accumulated to the final solution.
The term $(1+\epsilon)$ in \eqref{eq:als-bound} comes from the random
projection.  The size of the error $\epsilon$ can be arbitrary small
by controlling the parameters of the random projection $D_1,D_2$ and
$s$. 
%
%Also, we provide the error bound of TT-RALS without the random
%projection.
%%
%\begin{corollary} \label{cor:rals}
%	Suppose that the settings of Theorem \ref{thm:als} hold.
%	Let $X(\hat{\mG})$ be the solution of TT-RALS without the random projection operator $\mP_k$.
%	Then, with sufficiently large $n,t$ and high probability,
%    \begin{align*}
%        \|X(\hat{\mG}) - X(\mG^*)\|_F \leq C_A  \lambda_n \sum_{k=1}^{K-1} \sqrt{R_{k}}.
%    \end{align*}
%\end{corollary}
%%

%The error bound of Theorem \ref{thm:als} is similar to that of TT-convex in Theorem \ref{thm:convex}.
%Namely, when the initial points of the ALS are close to $\mG^*$ and the number of ALS iteration is sufficiently large, the ALS can obtain the same performance to TT-Convex.

%%% Local Variables:
%%% mode: latex
%%% TeX-master: "TTcomp_NIPS2017.tex"
%%% End:

\section{Related Work}

%We compare the TT-ADMM and TT-RALS with the existing tensor completion methods with TT-decomposition.
To solve the tensor completion problem with TT decomposition,
Wang \etal \cite{wang2016tensor} and Grasedyck \etal \cite{grasedyck2015alternating} developed
algorithms that iteratively solve minimization problems with respect
to $G_k$ for each $k = 1,\ldots,K$. Unfortunately, the adaptivity of
the TT rank is not well discussed.  \cite{wang2016tensor} assumed that
the TT rank is given. Grasedyck \etal \cite{grasedyck2015alternating} proposed a grid
search method. However, the TT rank is determined by a single
parameter (i.e., $R_1=\dots=R_{K-1}$) and the search method lacks its
generality.  Furthermore, the scalability problem remains in both
methods---they require more than $O(I^K)$ space.

Phien et al.(2016) \cite{phien2016efficient} proposed a convex
optimization method using the Schatten TT norm.  However, because they
employed an alternating-type optimization method, the global
convergence of their method is not guaranteed. Moreover, since they
maintain $X$ directly and perform the reshape of $X$ several times,
their method requires $O(I^K)$ time.

%TT-ADMM is the convex optimization method which can be guaranteed to
%converge to the global optimum via the ADMM approach and can select
%the rank adaptivity.  Also, the statistical error is theoretically
%evaluated by the value of TT rank.  Though the advantages, TT-ADMM
%cannot avoid the computational burden from controlling $X$ directly.
%
%TT-RALS is the method which can solve the computational complexity
%problem.  Since TT-RALS does not handle $X$ but controls only
%$\{G_k\}_{k=1}^K$, the computational burden is avoided.  Also, TT-RALS
%can select the TT rank adaptivity and its statistical performance is
%evaluated.  By setting the random projection parameter $s$
%sufficiently small, the time and space complexity does not increase
%exponentially as $K$ grows.

Table~\ref{tab:contribution} highlights the difference between the
existing and our methods. We emphasize that our study is the first
attempt to analyze the statistical performance of TT decomposition. In
addition, TT-RALS is only the method that both time and space
complexities do not grow exponentially in $K$.

\begin{table}[htbp]
  \centering
  {\small
  \begin{tabular}{rccccc}
    \hline
    Method & \shortstack{Global\\Convergence} & \shortstack{Rank\\Adaptivity} & \shortstack{Time\\Complexity}& \shortstack{Space\\Complexity}& \shortstack{Statistical\\Bounds}\\
    \hline
    TCAM-TT\cite{wang2016tensor}&        & & $O(nIKR^4)$ & $O(I^K)$ & \\
    ADF for TT\cite{grasedyck2015alternating}          &          & (search) &$O(KIR^3 + nKR^2)$& $O(I^K)$& \\
    SiLRTC-TT\cite{phien2016efficient}      & & \checkmark & $O(I^{3K/2})$ & $O(KI^K)$ & \\
   	\textbf{TT-ADMM}              &\checkmark & \checkmark & $O(K I^{3K/2})$ & $O(I^K)$ &\checkmark\\
    \textbf{TT-RALS}              &      &  \checkmark  & $O((n + KD^2)KI^2R^4)$ & $O(n + KI^2R^4)$ &\checkmark\\
    \hline
  \end{tabular}
  }
  \caption{Comparison of TT completion algorithms, with $R$ is a parameter for the TT rank such that $R = R_1 = \cdots = R_{K-1}$, $I = I_1 = \cdots = I_K$ is dimension, $K$ is the number of modes, $n$ is the number of observed elements, and $D$ is the dimension of random projection.}
  \label{tab:contribution}
\end{table}

%%% Local Variables:
%%% mode: latex
%%% TeX-master: "TTcomp_NIPS2017.tex"
%%% End:

\section{Experiments}
    \begin{figure}[htbp]
            \begin{center}
                \includegraphics[width=0.95\hsize]{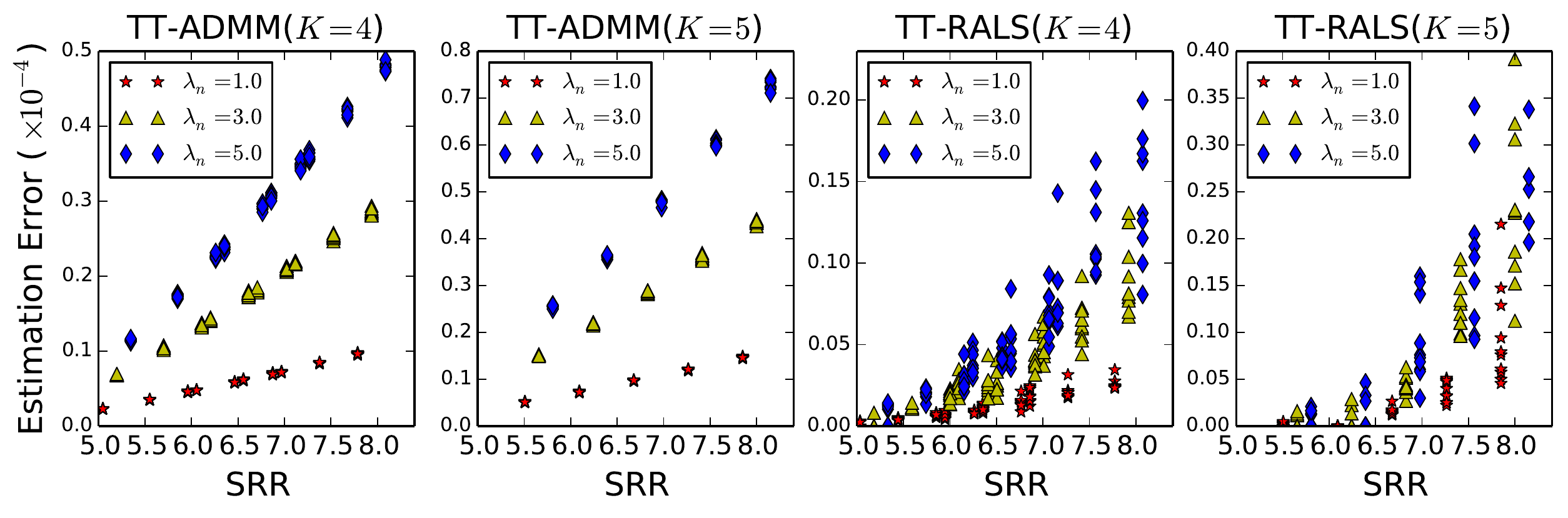}
            \end{center}
            \caption{Synthetic data: the estimation error
              $\|\hat{X} - X^*\|_F$ against SRR $\sum_k \sqrt{R_k}$
              with the order-$4$ tensor $(K=4)$ and the order-$5$
              tensor $(K=5)$. For each rank and $\lambda_n$, we
              measure the error by 10
              trials with different random seeds, which affect both
              the missing pattern and the initial
              points. \label{fig:validation}}
    \end{figure}

        \begin{table}
          \caption{Electricity data: the prediction error and the runtime (in seconds). \label{fig:elec}}
\begin{center}
        	\begin{tabular}{c|cc|cc|cc|cc}
				        	& \multicolumn{2}{|c|}{$K=5$} 		& \multicolumn{2}{|c}{$K=7$} 		& \multicolumn{2}{|c}{$K=8$} 		& \multicolumn{2}{|c}{$K=10$} 		\\ \hline
				Method    & 	Error		& 		Time		& 		Error	& 		Time		& 		Error	& 		Time		& 		Error	& 		Time		\\ \hline
				TCAM-TT &		0.219		&		2.174		&		0.928	&		27.497		&		0.928	&		146.651		&		N/A		&		N/A			\\
				ADF for TT &	0.998		&		1.221		&		1.160	&		23.211		&		1.180	&		278.712		&		N/A		&		N/A			\\
				SiLRTC-TT &		0.339		&		1.478		&		0.928	&		206.708		&		N/A		&		N/A			&		N/A		&		N/A			\\
				TT-ADMM &		0.221		&		0.289		&		1.019	&		154.991		&		1.061	&		2418.00		&		N/A		&		N/A  		\\
				TT-RALS &		0.219		&		4.644		&		0.928	&		4.726		&		0.928	&		7.654		&		1.173	&		7.968				
        	\end{tabular}        
\end{center}
        \end{table}

\subsection{Validation of Statistical Efficiency}

Using synthetic data, we verify the theoretical bounds derived in
Theorems~\ref{thm:convex} and \ref{thm:als}.  We first generate TT
components $\mG^*$; each component $G_k^*$ is generated as
$G_k^* = G_k^{\dagger} / \|G_k^{\dagger}\|_F$ where each element of
$G_k^{\dagger}$ is sampled from the i.i.d.  standard normal
distribution. Then we generate $Y$ by following the generative model
\eqref{model:obs} with the observation ratio $n/\prod_k I_k = 0.5$ and
the noise variance $0.01$.
We prepare two tensors of different size: an order-$4$ tensor of size
$8\times 8\times 10\times 10$ and an order-$5$ tensor of size
$5\times 5\times 7\times 7\times 7$.  At the order-$4$ tensor, the TT
rank is set as $(R_1,R_2,R_3)$ where $R_1,R_2,R_3 \in \{3,5,7\}$.  At
the order-$5$ tensor, the TT rank is set as $(R_1,R_2,R_3,R_4)$ where
$R_1,R_2,R_3,R_4 \in \{2,4\}$.  For estimation, we set the size of
$G_k$ and $\Pi_k$ as $10$, which is larger than the true TT rank.  The
regularization coefficient $\lambda_n$ is selected from $\{1,3,5\}$.
The parameters for random projection are set as $s = 20$ and $D_1=D_2=10$.

Figure \ref{fig:validation} shows the relation between the estimation
error and the sum of root rank (SRR) $\sum_k \sqrt{R_k}$.  The result of TT-ADMM shows that 
the empirical errors are linearly related to SSR which is shown by the theoretical result.
The result of TT-RALS roughly replicates the theoretical relationship.

%These results meet Theorem
%\ref{thm:convex} and Theorem \ref{thm:als}. 

\subsection{Higher-Order Markov Chain for Electricity Data}

We apply the proposed tensor completion methods for analyzing the
electricity consumption data~\cite{Lichman:2013}.  The dataset
contains time series measurements of household electric power
consumption for every minutes from December 2006 to November 2010 and
it contains over $200,000$ observations.

The higher-order Markov chain is a suitable method to represent
long-term dependency, and it is a common tool of time-series
analysis~\cite{hamilton1994time} and natural language
processing~\cite{jurafsky2014speech}.  Let $\{W_t\}_{t}$ be
discrete-time random variables take values in a finite set $B$, and
the order-$K$ Markov chain describes the conditional distribution of
$W_t$ with given $\{W_{\tau}\}_{\tau < t}$ as
$P(W_t | \{W_{\tau}\}_{\tau < t}) = P(W_t | W_{t-1}, \ldots,
W_{t-K})$.
As $K$ increases, the conditional distribution of $W_t$ can include
more information from the past observations.

%For practical use, the higher-order Markov chain has two problems: the memory requirement and unobserved path.
%Since it is necessary to storage $|B|^{M+1}$ numerics to represent the $M$-th Markov chain, there exists computational burden to compute the transition.
%Also, the variation of the realized path increases with large $M$, a part of the path is not observed unless the length of the observation is sufficiently long.

We complete the missing values of $K$-th Markov transition of the
electricity dataset.  We discretize the value of the dataset into $10$
values and set $K \in \{ 6,7,8,10\}$.  Next, we empirically estimate
the conditional distribution of size $10^K$ using $200,000$
observations.  Then, we create $X$ by randomly selecting $n=10,000$
elements from the the conditional distribution and regarding the other
elements as missing. After completion by the TT methods, the
prediction error is measured.  We select hyperparameters using a grid
search with cross-validation.

Figure~\ref{fig:elec} compares the prediction error and the runtime.
When $K=5$, the rank adaptive methods achieve low estimation errors.
As $K$ increases, however, all the methods except for TT-RALS suffers
from the scalability issue. Indeed, at $K=10$, only TT-RALS works and
the others does not due to exhausting memory.
%For accuracy, TT-RALS outperforms the others because it selects the TT rank appropriately.

%%% Local Variables:
%%% mode: latex
%%% TeX-master: "TTcomp_NIPS2017.tex"
%%% End:

\section{Conclusion}

In this paper, we investigated TT decomposition from the statistical
and computational viewpoints. To analyze its statistical performance,
we formulated the convex tensor completion problem via the low-rank TT
decomposition using the TT Schatten norm. In addition, because the
optimization of the convex problem is infeasible, we developed an
alternative algorithm called TT-RALS by combining with the ideas of
random projection and alternating minimization. Based on this, we
derived the error bounds of estimation for both the convex minimizer
and the solution obtained by TT-RALS. The experiments supported our
theoretical results and demonstrated the scalability of TT-RALS.

%%% Local Variables:
%%% mode: latex
%%% TeX-master: "TTcomp_NIPS2017.tex"
%%% End:

%\section*{References}
%\setlength{\bibsep}{0pt plus 0.3ex}

\newpage

\appendix

\if0
\begin{center}
	\begin{Large}
		Supplementary Material for \\
		``On Tensor Train Rank Minimization : Statistical Efficiency and Scalable Algorithm''
	\end{Large}
\end{center}
\fi

\section{Proof of Theorem \ref{thm:random}}

The theorem is obtained immediately by combining Li et al.~\cite{li2006very} and Mu et al.~\cite{mu2011accelerated}.

Let $\Pi: \mathbb{R}^n \to \mathbb{R}^k$ be a sparse random projection defined by 
\begin{align}
  \Pi_{ij} = \begin{cases} +\sqrt{s/k} & \text{probability } 1/2s, \\ 0 & \text{probability } 1 - 1/s, \\ -\sqrt{s/k} & \text{probability } 1/2s. \end{cases}
\end{align}
Then the following theorem holds.
\begin{lemma}[Lemma 4 in \cite{li2006very}]
\label{lem:sparseJL}
Let $u$ be a unit vector. Then
$\sqrt{k} (\Pi u)_i \to N(0,1)$ and $k \| \Pi u \|_2^2 \to \chi_k^2$ in law with the convergence rate 
\begin{align}
  |P(\sqrt{k} (\Pi u)_i < t) - P(N(0,1) < t)| \le 0.8 \sqrt{s} \sum_j |u_j|^3.
\end{align}
%If $s = \Omega(\alpha^2 \log (n / \epsilon \delta))$ and $k = \Omega(\log (1/\delta)/\epsilon^2)$, for every vector $x \in \mathbb{R}^n$ satisfying $\| x \|_\infty \le \alpha \| x \|_2$, we have
%\begin{align}
%  (1 - \epsilon) \| x \|_2 \le \| \Pi x \|_2 \le (1 + \epsilon) \| x \|_2.
%\end{align}
%with probability at least $1 - O(\delta)$.
\qed
\end{lemma}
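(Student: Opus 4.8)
\section*{Proof proposal for Lemma~\ref{lem:sparseJL}}

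The plan is to recognize this lemma as a direct instance of the Berry--Esseen theorem for sums of independent, non-identically distributed random variables. First I would fix a row index $i$ and expand the projected coordinate as a scalar sum $\sqrt{k}(\Pi u)_i = \sum_{j=1}^n \xi_j$, where $\xi_j := \sqrt{k}\,\Pi_{ij}\,u_j$. Since the entries $\{\Pi_{ij}\}_j$ of a single row are drawn independently according to \eqref{def:pi}, the summands $\xi_j$ are independent. The key computation is the first three moments of the rescaled entry: the variable $\sqrt{k}\,\Pi_{ij}$ takes the values $\pm\sqrt{s}$ each with probability $1/2s$ and $0$ otherwise, so $E[\sqrt{k}\,\Pi_{ij}] = 0$, $E[(\sqrt{k}\,\Pi_{ij})^2] = 1$, and $E[|\sqrt{k}\,\Pi_{ij}|^3] = \sqrt{s}$. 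Multiplying by $u_j$ gives $E[\xi_j]=0$, $\mathrm{Var}(\xi_j)=u_j^2$, and $E[|\xi_j|^3] = \sqrt{s}\,|u_j|^3$.

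Next I would sum these moment contributions over $j$. Because $u$ is a unit vector, the total variance is $\sum_j \mathrm{Var}(\xi_j) = \sum_j u_j^2 = \|u\|^2 = 1$, so $S := \sum_j \xi_j = \sqrt{k}(\Pi u)_i$ is already a standardized (mean zero, unit variance) sum. The aggregated third absolute moment is $\sum_j E[|\xi_j|^3] = \sqrt{s}\sum_j |u_j|^3$. Applying the non-i.i.d.\ Berry--Esseen bound then yields
\[
  \sup_t \bigl| P(S < t) - P(N(0,1) < t) \bigr|
  \;\le\; C_0 \,\frac{\sum_j E[|\xi_j|^3]}{\bigl(\sum_j \mathrm{Var}(\xi_j)\bigr)^{3/2}}
  \;=\; C_0\,\sqrt{s}\sum_j |u_j|^3 ,
\]
which is exactly the claimed rate once the universal constant satisfies $C_0 \le 0.8$; the classical Esseen constant ($C_0 \le 0.7915$) suffices. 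This establishes the marginal convergence $\sqrt{k}(\Pi u)_i \to N(0,1)$ together with its explicit rate. For the second assertion, I would observe that distinct rows of $\Pi$ are built from disjoint collections of i.i.d.\ entries, so the coordinates $\{\sqrt{k}(\Pi u)_i\}_{i=1}^k$ are mutually independent and each converges to $N(0,1)$; hence $k\|\Pi u\|_2^2 = \sum_{i=1}^k \bigl(\sqrt{k}(\Pi u)_i\bigr)^2$ converges in law to a sum of $k$ independent squared standard normals, i.e.\ to $\chi_k^2$, by independence across rows and the continuous mapping theorem applied coordinatewise.

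The substantive content is entirely in the moment bookkeeping and the invocation of a quantitative CLT, so there is no deep obstacle; the only point requiring care is the bookkeeping of the constant, namely confirming that the admissible Berry--Esseen constant in the non-identically-distributed regime is below the stated value $0.8$ so that the inequality holds as written. A secondary (minor) care is to note that the $\chi_k^2$ statement is a convergence in law that follows from the marginal rate plus row independence, and does not itself inherit a comparably clean Berry--Esseen-type rate from this argument.
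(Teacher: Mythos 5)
You have reconstructed the proof correctly and by the same route as the cited source: the moment bookkeeping ($\mathrm{E}[\xi_j]=0$, $\mathrm{Var}(\xi_j)=u_j^2$, $\mathrm{E}[|\xi_j|^3]=\sqrt{s}\,|u_j|^3$, with total variance $1$ since $u$ is a unit vector) combined with the non-identically-distributed Berry--Esseen theorem with an admissible constant below $0.8$ (e.g.\ van Beek's $0.7975$) yields exactly the stated rate, and independence of the rows of $\Pi$ plus the continuous mapping theorem gives $k\|\Pi u\|_2^2 \to \chi_k^2$ in law. Note that the paper itself offers no proof of this lemma---it is imported verbatim from Li et al.---so your argument is precisely the standard one behind the citation; your closing caveat is also well placed, since the quantitative $\chi^2$-approximation the paper invokes immediately afterward (the bound $e^{-k\epsilon^2/4}$ under the hypothesis $\sum_j |u_j|^3 \le e^{-k\epsilon^2/4}/(1.6\,k\sqrt{s})$) does require aggregating the per-coordinate Berry--Esseen rate over the $k$ coordinates, and does not follow from the distributional limit alone.
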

Suppose that $\sum_j |u_j|^3 \le e^{-k \epsilon^2/4} / (1.6 k \sqrt{s})$. 
Then 
\begin{align}
  |P(\|\Pi u\|_2^2 \in [1-\epsilon, 1+\epsilon]) - P(\chi_k^2  \in [k(1-\epsilon), k(1+\epsilon)])| \le e^{-k \epsilon^2/4}.
\end{align}
By using 
\begin{align}
  P(\chi_k^2  \in [k(1-\epsilon), k(1+\epsilon)]) \le 2 e^{-k (\epsilon^2 - \epsilon^3)/4},
\end{align}
we have
\begin{align}
  P(\| \Pi u \|_2^2 \in [1-\epsilon, 1+\epsilon]) \le 3 e^{-k (\epsilon^2 - \epsilon^3)/4}.
\end{align}

The preservation of $L_2$ norm implies the preservation of the Schatten-$1$ norm as follows.
\begin{lemma}[Restatement of Theorem~1 in \cite{mu2011accelerated}]
\label{lem:schattenbound}
Let $Z$ be an $m \times n$ matrix with rank $r$.
If $k \ge r$ and $\| \Pi u \|_2^2 \in [1-\epsilon, 1+\epsilon]$ for all singular vectors $u$ of $Z$, we have
\begin{align}
  \sqrt{(1 - \epsilon)/r} \| Z \|_s \le \| \Pi Z \|_s \le \sqrt{1 + \epsilon} \| Z \|_s.
\end{align}
\qed
\end{lemma}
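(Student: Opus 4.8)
The plan is to reduce everything to the singular value decomposition of $Z$ and to prove the two inequalities separately, since they call for genuinely different tools. Write $Z = \sum_{i=1}^r \sigma_i u_i v_i^T$ with $\{u_i\}\subset\R^m$ and $\{v_i\}\subset\R^n$ orthonormal and $\sigma_i = \sigma_i(Z) > 0$, so that $\Pi Z = \sum_{i=1}^r \sigma_i (\Pi u_i) v_i^T$ and $\|Z\|_s = \sum_{i=1}^r \sigma_i$. The hypothesis supplies $\|\Pi u_i\|^2 \in [1-\epsilon, 1+\epsilon]$ for each left singular vector $u_i$; the right singular vectors $v_i$ are untouched by $\Pi$.

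For the upper bound I would invoke the triangle inequality for $\|\cdot\|_s$ together with the fact that a rank-one matrix $a b^T$ has Schatten-1 norm $\|a\|\,\|b\|$:
\begin{align*}
\|\Pi Z\|_s \;\le\; \sum_{i=1}^r \sigma_i \, \|(\Pi u_i) v_i^T\|_s \;=\; \sum_{i=1}^r \sigma_i \, \|\Pi u_i\| \;\le\; \sqrt{1+\epsilon}\,\sum_{i=1}^r \sigma_i \;=\; \sqrt{1+\epsilon}\,\|Z\|_s,
\end{align*}
using $\|v_i\| = 1$. This direction is immediate and uses neither orthogonality of the $u_i$ nor $k \ge r$.

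The lower bound is the substantive part, and the key idea is to route through the Frobenius norm so as to avoid having to control the mutual near-orthogonality of the projected vectors $\{\Pi u_i\}$. Because the $v_i$ are orthonormal, the cross terms cancel and
\begin{align*}
\|\Pi Z\|_F^2 \;=\; \sum_{i,j} \sigma_i \sigma_j \, \langle \Pi u_i, \Pi u_j\rangle \, \langle v_i, v_j\rangle \;=\; \sum_{i=1}^r \sigma_i^2 \, \|\Pi u_i\|^2 \;\ge\; (1-\epsilon)\sum_{i=1}^r \sigma_i^2 \;=\; (1-\epsilon)\,\|Z\|_F^2 .
\end{align*}
Then I would chain the two elementary comparisons $\|\Pi Z\|_s \ge \|\Pi Z\|_F$ (the Schatten-1 norm dominates the Frobenius norm) and $\|Z\|_F \ge \|Z\|_s/\sqrt{r}$ (Cauchy--Schwarz over the $r$ singular values of $Z$) to obtain
\begin{align*}
\|\Pi Z\|_s \;\ge\; \|\Pi Z\|_F \;\ge\; \sqrt{1-\epsilon}\,\|Z\|_F \;\ge\; \sqrt{(1-\epsilon)/r}\,\|Z\|_s .
\end{align*}

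The step I expect to be the crux is this lower bound, specifically the decision to pass through $\|\cdot\|_F$. A direct attempt to bound $\|\Pi Z\|_s$ from below would require $\{\Pi u_i\}$ to remain approximately orthonormal, a genuine subspace-embedding property that the pointwise hypothesis $\|\Pi u_i\|^2 \in [1-\epsilon,1+\epsilon]$ does not guarantee. The Frobenius detour circumvents this because orthonormality of the $v_i$ already annihilates the cross terms; the cost is the $1/\sqrt{r}$ factor lost in the Cauchy--Schwarz comparison, which is precisely the $\sqrt{(1-\epsilon)/r}$ appearing in the statement. The condition $k \ge r$ enters not through the algebra but through the attainability of the premise, since a map into fewer than $r$ dimensions cannot preserve the lengths of $r$ independent directions.
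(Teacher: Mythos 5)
Your proof is correct. Note that the paper itself supplies no argument for this lemma---it is stated as a restatement of Theorem~1 of Mu et al.\ and closed immediately with a QED mark---so there is no in-paper proof to compare against; your derivation is the natural self-contained one and presumably mirrors the cited source. Both directions check out: the upper bound is the triangle inequality applied to the rank-one expansion $\Pi Z=\sum_i\sigma_i(\Pi u_i)v_i^T$ together with $\|ab^T\|_s=\|a\|\,\|b\|$, and the lower bound correctly exploits orthonormality of the $v_i$ to annihilate the cross terms in $\|\Pi Z\|_F^2$, after which the chain $\|\Pi Z\|_s\ge\|\Pi Z\|_F\ge\sqrt{1-\epsilon}\,\|Z\|_F\ge\sqrt{(1-\epsilon)/r}\,\|Z\|_s$ is exactly right; routing through the Frobenius norm is indeed what lets you avoid any near-orthogonality claim about the $\Pi u_i$. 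One small quibble with your closing remark: the claim that $k\ge r$ is needed for the premise to be attainable is not accurate---a linear map into fewer than $r$ dimensions can still preserve the individual lengths of $r$ orthonormal vectors (it just cannot keep them orthogonal)---but since your argument never invokes $k\ge r$, this does not affect correctness; the hypothesis is simply carried over from the cited theorem.
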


Now we prove Theorem~\ref{thm:random}.
Let $Z = Q_k(X)$ be a $\prod_{k'=1}^k I_{k'} \times \prod_{k'=k+1}^K I_{k'}$ matrix obtained by reshaping tensor $X$.
Since $X$ is a TT of rank $(R_1, \ldots, R_K)$, the rank of matrix $Z$ is at most $R_k$.
By applying Lemma~\ref{lem:schattenbound} twice, we obtain
\begin{align}
  \frac{1 - \epsilon}{R_k} \| Z \|_s \le \| \Pi Z \|_s \le (1 + \epsilon) \| Z \|_s
\end{align}
with some probability. 
If $\sum_j |u_j|^3 \le e^{-k \epsilon^2/4} / (1.6 k \sqrt{s})$ for all singular vectors of $Z$, 
the probability is at least $1 - 6 R_k e^{-k \epsilon^2/4}$ since there are $2 R_k$ singular vectors.

\section{Proof of Theorem \ref{thm:convex}}

\begin{proof}

Since $\hat{X}$ is the minimizer of the optimization problem, we have the following basic inequality
\begin{align*}
	\frac{1}{2n}\|Y - \mathfrak{X}(\hat{X})\|^2 + \lambda_n \sum_{k=1}^{K-1} \|Q_k(\hat{X})\|_{s} \leq \frac{1}{2n}\|Y - \mathfrak{X}(\hat{X}^*)\|^2 + \lambda_n  \sum_{k=1}^{K-1} \|Q_k(X^*)\|_{s}.
\end{align*}
Using the relation that
\begin{align*}
	\|Y - \mathfrak{X}(\hat{X})\|^2& = \|(Y - \mathfrak{X}(X^*)) -( \mathfrak{X}(\hat{X}) - \mathfrak{X}(X^*))\|^2 \\
	&= \|Y - \mathfrak{X}(X^*)\|^2 + \| \mathfrak{X}(\hat{X}) - \mathfrak{X}(X^*)\|^2 - 2\langle Y - \mathfrak{X}(X^*), \mathfrak{X}(\hat{X}) - \mathfrak{X}(X^*)\rangle,
\end{align*}
we rewrite the basic inequality as
\begin{align*}
	\frac{1}{2n}\| \mathfrak{X}(\hat{X}) - \mathfrak{X}(X^*)\|^2 \leq \frac{1}{n} \langle \mathfrak{X}(\hat{X}) - \mathfrak{X}(X^*), \mE \rangle + \lambda_n \sum_{k=1}^{K-1} \left( \|Q_k(X^*)\|_s - \|Q_k(\hat{X})\|_s \right).
\end{align*}
Define the error $\Delta := \hat{X} - X^*$.
Applying the H\"older's inequality, we have
\begin{align*}
	&\frac{1}{n}\langle \mathfrak{X}(\Delta) , \mE \rangle = \frac{1}{n} \langle \Delta , \mathfrak{X}^*(\mE) \rangle = \frac{1}{n}\frac{1}{K-1}\sum_{k=1}^{K-1} \langle Q_k(\Delta), \mathfrak{X}^*(\mE) \rangle \\
	&\leq \frac{1}{n} \frac{1}{K-1}\sum_{k=1}^{K-1} \|\mE\|_{\infty} \|Q_k(\Delta)\|_{s} \leq \frac{\lambda_n }{K-1} \sum_{k=1}^{K-1}\|Q_k(\Delta)\|_{s} ,
\end{align*}
where $\mathfrak{X}^*$ is an adjoint operator of $\mathfrak{X}$ and the last inequality holds by the setting of $\lambda_n$.
Also, the triangle inequality and the linearity of $Q_k(\cdot)$ yield
\begin{align*}
	\|Q_k(X^*)\|_s - \|Q_k(\hat{X})\|_s \leq \|Q_k(\Delta)\|_s.
\end{align*}
Then, we bound the inequality as
\begin{align*}
	\frac{1}{2n}\|\mathfrak{X}(\Delta)\|^2 &\leq \frac{\lambda_n }{K-1} \sum_{k=1}^{K-1}\|Q_k(\Delta)\|_{s} + \frac{ \lambda_n}{K-1} \sum_{k=1}^{K-1}\|Q_k(\Delta)\|_s =  \frac{2\lambda_n}{K-1}\sum_{k=1}^{K-1}\|Q_k(\Delta)\|_{s} .
\end{align*}
To bound $\|Q_k(\Delta)\|_{s}$,  we apply the result of Lemma 1 in \cite{negahban2011estimation} and Lemma 2 in \cite{tomioka2011statistical}.
Along with proof of the lemmas, we obtain the property that a rank of $Q_k(\Delta)$ is bounded by $2R_k$, thus the Cauchy-Schwartz inequality implies
\begin{align*}
	\|Q_k(\Delta)\|_s \leq  \sqrt{2R_k }\|Q_k(\Delta)\|_F.
\end{align*}
Then we obtain
\begin{align*}
	\|\mathfrak{X}(\Delta)\|_F^2 &\leq  \frac{2 \lambda_n}{K-1}\sum_{k=1}^{K-1} \sqrt{2R_k }\|Q_k(\Delta)\|_F.
\end{align*}

We apply the completion theory by \cite{candes2010matrix, candes2012exact} to bound $\|\mathfrak{X}(\Delta)\|_F^2$ below.
Let $k' \in \{1,\ldots,K\}$ be the index which satisfies Assumption \ref{asmp:incoherence_convex}, and we have $\|\mathfrak{X}(\Delta)\|^2 = \|\tilde{\mathfrak{X}}(Q_{k'}(\Delta))\|^2$ where $\tilde{\mathfrak{X}}$ is a rearranging operator for the reshaped tensor.
Then, Theorem 7 in \cite{candes2010matrix} yields that 
\begin{align*}
	\|Q_{k'}(\Delta)\|_F \leq \left(  \sqrt{\frac{48  \min\{I_{\leq k'}, I_{k'<}\}}{n}}+ 1 \right) \|\tilde{\mathfrak{X}}(Q_{k'}(\Delta))\|,
\end{align*}
with probability at least $1-( \max\{I_{\leq k'}, I_{k'<}\})^{-3}$ and
\begin{align*}
	n \geq C_{m'} \mu_{k'}^2  \max\{I_{\leq k'}, I_{k'<}\} R_{k'} \log^3  \max\{I_{\leq k'}, I_{k'<}\},
\end{align*}
with a constant $C_{m'} > 0$.
%Here, we define $\tilde{I}_{k'}^+ := \max \{ \prod_{\ell \leq k'} I_{\ell}, \prod_{\ell > k'} I_{\ell}\}$ and  $\tilde{I}_{k'}^- := \max \{ \prod_{\ell \leq k'} I_{\ell}, \prod_{\ell > k'} I_{\ell}\}$.
Then we obtain that 
\begin{align*}
	\frac{1}{n}\|\tilde{\mathfrak{X}}(Q_{k'}({\Delta}))\|^2 \geq   C_{\kappa'} \|Q_{k'}(\Delta)\|_F^2 = C_{\kappa'} \|\Delta\|_F^2,
\end{align*}
where $C_{\kappa} = (144  \min\{I_{\leq k'}, I_{k'<}\} + 3n)^{-1} > 0$.

Finally, we have
\begin{align*}
	\|\Delta\|_F^2 &\leq  C_{\kappa '}^{-1 } \frac{2 \lambda_n}{K-1}\sum_{k=1}^{K-1} \sqrt{2R_k }\|Q_k(\Delta)\|_F \\
	&= C_{\kappa '}^{-1 }\|\Delta\|_F \frac{2 \lambda_n}{K-1}\sum_{k=1}^{K-1} \sqrt{2R_k }\\
	&= 3C_{\kappa '}^{-1 }\|\Delta\|_F \frac{\lambda_n}{K}\sum_{k=1}^{K-1} \sqrt{2R_k }.
\end{align*}
Dividing both hands side by $\|\Delta\|_F$ provides the result.

\end{proof}

\section{Proof of Theorem \ref{thm:als}}

Preliminarily, we introduce an alternative formation of the optimization problem.
For each $k \in \{1,2,\ldots,K\}$, we rewrite the term $X_k (\mG)$ as
\begin{align*}
	G_k \times_2 G_{<k} \times_3 G_{k<},
\end{align*}
where $\times_j$ denotes the $j$-mode product (for detail, see \cite{kolda2009tensor}).
Here, $G_{<k}$ is a tensor with size $R_k \times I_1 \times \cdots \times I_{K-1}$ and its element is given as
\begin{align*}
	[G_{<k}]_{r,j_1,\ldots,j_{k-1}} = [G_1]_{j_1,:,:}[G_2]_{j_2,:,:} \cdots [G_{k-1}]_{j_{k-1},:,r},
\end{align*}
for $j_{k'} = 1,\ldots,I_{k'}$ and $r = 1,\ldots,R_k$.
Namely, $G_{<k}$ is the left side of tensor train decomposition of $X$ than $G_k$.
Similarly, $G_{k<}$ is a tensor with size $R_{k+1} \times I_{k+1} \times \cdots \times I_{K}$ and its element is given as
\begin{align*}
	[G_{k<}]_{r,j_{k+1},\ldots,j_{K}} = [G_{k+1}]_{j_{k+1},r,:}\cdots [G_{K}]_{j_{K},:,:},
\end{align*}
for $j_{k'} = 1,\ldots,I_{k'}$ and $r = 1,\ldots,R_{k+1}$.
Using the result, the ALS optimization problem \eqref{opt:als2} is rewritten as
\begin{align}
%	\min_{G_k} \left[ \frac{1}{2n} \|Y - \mathfrak{X}(G_k \times_2 G_{<k} \times_3 G_{k<})\|^2 + \frac{\lambda_n}{K-1}\sum_{k'=1}^{K-1} \|Q_{k'}(G_k \times_2 G_{<k} \times_3 G_{k<})\|_{s} \right]. \label{opt:als3}
	\min_{G_k} \left[ \frac{1}{2n} \|Y - \mathfrak{X}(G_k \times_2 G_{<k} \times_3 G_{k<})\|^2 + \frac{\lambda_n}{K-1}\sum_{k'=1}^{K-1} \|\mP_{k'}(G_k \times_2 G_{<k} \times_3 G_{k<})\|_{s} \right]. \label{opt:als3}
\end{align}
When $k=1$, we set $G_{<k} = 1$.
Similarly, when $k=K$, $G_{k<} = 1$ holds.

Using the formula, we investigate the convergence of $G_k$ by fixing other elements as $G_{<k} = \tilde{G}_{<k}$ and $G_{k<} = \tilde{G}_{k<}$.
Let $\{G_k^*\}_{k^*=1}^K$ be a set of tensor which formulates the true tensor $X^*$.
Also, $G_{<k}^*$ and $G_{k<}^*$ are defined similarly.
To evaluate the convergence, we introduce that
\begin{align*}
	\Xi(\tilde{\mG}) := \max_{k \in \{1,\ldots,K\}} \left[\|\tilde{G}_k - G_k^*\|_F \right].
\end{align*}

We obtain the following lemma which evaluates the optimization of \eqref{opt:als3} with given $\tilde{G}_{<k}$ and $\tilde{G}_{k<}$.
\begin{lemma} \label{lem:als_1}
	For each $k \in \{1,\ldots,K\}$, consider the optimization \eqref{opt:als3} with respect to $G_k$ with given $\tilde{\mG}$.
	Then, with probability at least $1-( \max\{I_{\leq k'}, I_{k'<}\})^{-3}$ and 
	\begin{align*}
		n \geq C_{m} \mu_{k'}^2  \max\{I_{\leq k'}, I_{k'<}\} R_{k'} \log^n  \max\{I_{\leq k'}, I_{k'<}\},
	\end{align*}
	we obtain
	\begin{align*}
			\|\hat{G}_k - G_k^*\| \leq 6 (C_{\kappa} C_K)^{-1} \left\{ 2(K-1)C_K(1+n^{-1})  \Xi(\tilde{\mG})  +   \frac{2 \lambda_n (2 + \epsilon) }{K-1}\sum_{k'=1}^{K-1} \sqrt{2R_{k'}}\right\}.
	\end{align*}
\end{lemma}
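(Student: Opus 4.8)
The plan is to mimic the basic-inequality argument used for Theorem~\ref{thm:convex}, applied now to the convex subproblem \eqref{opt:als3} in the single block $G_k$ while tracking two error sources absent from the convex case: the discrepancy of the fixed blocks $\tilde{G}_{<k},\tilde{G}_{k<}$ from the truth (measured by $\Xi(\tilde{\mG})$), and the distortion from replacing the exact Schatten norm $\|Q_{k'}(\cdot)\|_s$ by the randomized $\|\mP_{k'}(\cdot)\|_s$. Throughout I write $\hat{X} = \hat{G}_k \times_2 \tilde{G}_{<k} \times_3 \tilde{G}_{k<}$, $\tilde{X}^* = G_k^* \times_2 \tilde{G}_{<k} \times_3 \tilde{G}_{k<}$, and $X^* = X(\mG^*)$. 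Since $\tilde{X}^*$ is feasible for \eqref{opt:als3}, optimality of $\hat{G}_k$ gives a basic inequality comparing $\hat{X}$ with $\tilde{X}^*$.

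First I would expand the data-fidelity term using $Y = \mathfrak{X}(X^*)+\mE$ and $\Delta := \hat{X} - \tilde{X}^*$. Because the noise is centered at the true tensor $X^*$ rather than at $\tilde{X}^*$, this produces, besides the familiar noise inner product $\frac1n\langle\mathfrak{X}(\Delta),\mE\rangle$, an extra cross term $\frac1n\langle\mathfrak{X}(\Delta),\mathfrak{X}(X^*-\tilde{X}^*)\rangle$ encoding the fixed-block error. Rearranging yields
\begin{align*}
\frac{1}{2n}\|\mathfrak{X}(\Delta)\|^2 \le \frac1n\langle\mathfrak{X}(\Delta),\mE\rangle + \frac1n\langle\mathfrak{X}(\Delta),\mathfrak{X}(X^*-\tilde{X}^*)\rangle + \frac{\lambda_n}{K-1}\sum_{k'=1}^{K-1}\bigl(\|\mP_{k'}(\tilde{X}^*)\|_s - \|\mP_{k'}(\hat{X})\|_s\bigr).
\end{align*}

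Next I would bound the three right-hand terms. For the noise term, the adjoint and H\"older manipulation of Theorem~\ref{thm:convex} together with $\lambda_n\ge\|\mathfrak{X}^*(\mE)\|_\infty/n$ controls it by $\frac{\lambda_n}{K-1}\sum_{k'}\|Q_{k'}(\Delta)\|_s$, which I pass to the projected norm via Theorem~\ref{thm:random} at cost $(1+\epsilon)$. For the regularizer difference, linearity of $\mP_{k'}$ and the triangle inequality give $\le\|\mP_{k'}(\Delta)\|_s$, again comparable to $(1+\epsilon)\|Q_{k'}(\Delta)\|_s$; invoking Lemma~1 of \cite{negahban2011estimation} and Lemma~2 of \cite{tomioka2011statistical} bounds the rank of $Q_{k'}(\Delta)$ by $2R_{k'}$, so Cauchy--Schwarz and the reshaping identity $\|Q_{k'}(\Delta)\|_F=\|\Delta\|_F$ turn each norm into $\sqrt{2R_{k'}}\,\|\Delta\|_F$; the two contributions merge into the stated $(2+\epsilon)$ factor. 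For the cross term I would expand $X^*-\tilde{X}^*$ multilinearly in the block perturbations $\tilde{G}_j-G_j^*$; using the normalization $\|G_j\|=1$ and a telescoping estimate, $\|X^*-\tilde{X}^*\|_F\lesssim (K-1)\,\Xi(\tilde{\mG})$, and Cauchy--Schwarz with an operator-norm bound on $\mathfrak{X}$ produces the $2(K-1)C_K(1+n^{-1})\,\Xi(\tilde{\mG})$ contribution. Finally I would apply the restricted-strong-convexity step exactly as in Theorem~\ref{thm:convex}: under Assumption~\ref{asmp:incoherence_convex} and the sample-size condition, Theorem~7 of \cite{candes2010matrix} gives $\frac1n\|\mathfrak{X}(\Delta)\|^2\ge C_\kappa\|\Delta\|_F^2$ with the stated probability; since $\Delta=(\hat{G}_k-G_k^*)\times_2\tilde{G}_{<k}\times_3\tilde{G}_{k<}$ is linear in the block error, the full-row-rank assumption on the reshaped $G_k^*$ lets me relate $\|\Delta\|_F$ to $\|\hat{G}_k-G_k^*\|$ through the conditioning constant $C_K$, so dividing by the common factor $\|\Delta\|_F$ and rearranging yields the claim with prefactor $6(C_\kappa C_K)^{-1}$.

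I expect the cross-term analysis to be the main obstacle: bounding $\|X^*-\tilde{X}^*\|_F$ requires a careful telescoping expansion of the products $G_{<k}=G_1\cdots G_{k-1}$ and $G_{k<}=G_{k+1}\cdots G_K$ in the individual perturbations, and transferring the resulting tensor-level bound back to the single block $\hat{G}_k$ via the well-conditioning of the fixed factors is where the constant $C_K$ and the precise $K$-dependence enter. A secondary difficulty is the bookkeeping of the one-sided $(1\pm\epsilon)$ and $1/R_{k'}$ distortions of Theorem~\ref{thm:random}, which must accumulate only into the benign $(2+\epsilon)$ factor rather than degrade the rate.
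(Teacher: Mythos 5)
Your proposal is correct and follows essentially the same route as the paper's proof: a basic inequality from block-wise optimality, a decomposition of the cross term into fixed-block perturbation contributions bounded by $(K-1)C_K\Xi(\tilde{\mG})$ via the conditioning of the factors, a $(2+\epsilon)$-distortion transfer between $\|\mP_{k'}(\cdot)\|_s$ and $\|Q_{k'}(\cdot)\|_s$ using Theorem~\ref{thm:random} together with the rank-$2R_{k'}$ Cauchy--Schwarz bound, and the restricted-strong-convexity lower bound from the matrix completion theory of Cand\`es et al. The only cosmetic difference is that you keep the fixed-block error as a single telescoped cross term where the paper splits it into three pieces $T_1, T_2, T_3$, which is an equivalent bookkeeping.
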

\begin{proof}

Our proof takes following four steps: 1) derive a basic inequality from the optimality condition, 2) bound terms of the RHS of the basis inequality, 3) bound below the LHS of the basis inequality, and 4) combine the result.

\paragraph{Step 1.} Derive a basic inequality.

By the optimality condition of \eqref{opt:als3} with given $\tilde{G}_{<k}$ and $\tilde{G}_{k<}$, we have
\begin{align}
	 &\frac{1}{2n} \|Y - \mathfrak{X}(\hat{G}_k \times_2 \tilde{G}_{<k} \times_3 \tilde{G}_{k<})\|^2 + \frac{\lambda_n}{K-1}\sum_{k'=1}^{K-1} \|\mP_{k'}(\hat{G}_k \times_2 \tilde{G}_{<k} \times_3 \tilde{G}_{k<})\|_{s}  \notag \\
	 & \quad \leq \frac{1}{2n} \|Y - \mathfrak{X}(G_k^* \times_2 \tilde{G}_{<k} \times_3 \tilde{G}_{k<})\|^2 + \frac{\lambda_n}{K-1}\sum_{k'=1}^{K-1} \|\mP_{k'}(G_k^* \times_2 \tilde{G}_{<k} \times_3 \tilde{G}_{k<})\|_{s}. \label{cond:opt}
\end{align}
Using the triangle inequality and the linearity of $\mathfrak{X}$ and the mode product $\times_j$, we obtain 
\begin{align*}
	&\|Y - \mathfrak{X}(\hat{G}_k \times_2 \tilde{G}_{<k} \times_3 \tilde{G}_{k<})\|^2 \\
	&=\|\{Y - \mathfrak{X}(G_k^* \times_2 \tilde{G}_{<k} \times_3 \tilde{G}_{k<})\} -\{\mathfrak{X}(\hat{G}_k \times_2 \tilde{G}_{<k} \times_3 \tilde{G}_{k<}) - \mathfrak{X}(G_k^* \times_2 \tilde{G}_{<k} \times_3 \tilde{G}_{k<})\} \|^2 \\
	&= \|\{Y - \mathfrak{X}(G_k^* \times_2 \tilde{G}_{<k} \times_3 \tilde{G}_{k<})\} -\mathfrak{X}((\hat{G}_k - G_k^*) \times_2 \tilde{G}_{<k} \times_3 \tilde{G}_{k<})  \|^2 \\
	& = \|Y - \mathfrak{X}(G_k^* \times_2 \tilde{G}_{<k} \times_3 \tilde{G}_{k<})\|^2 + \| \mathfrak{X}((\hat{G}_k - G_k^*) \times_2 \tilde{G}_{<k} \times_3 \tilde{G}_{k<})  \|^2  \\
	& \quad -2 \langle Y - \mathfrak{X}(G_k^* \times_2 \tilde{G}_{<k} \times_3 \tilde{G}_{k<}), \mathfrak{X}((\hat{G}_k - G_k^*) \times_2 \tilde{G}_{<k} \times_3 \tilde{G}_{k<}) \rangle.
\end{align*}
Substituting the result into \eqref{cond:opt}, we obtain that 
\begin{align}
	 &\frac{1}{2n} \| \mathfrak{X}((\hat{G}_k - G_k^*) \times_2 \tilde{G}_{<k} \times_3 \tilde{G}_{k<})  \|^2 + \frac{\lambda_n}{K-1}\sum_{k'=1}^{K-1} \|\mP_{k'}(\hat{G}_k \times_2 \tilde{G}_{<k} \times_3 \tilde{G}_{k<})\|_{s}  \notag \\
	 & \quad \leq \frac{1}{n} \langle Y - \mathfrak{X}(G_k^* \times_2 \tilde{G}_{<k} \times_3 \tilde{G}_{k<}), \mathfrak{X}((\hat{G}_k - G_k^*) \times_2 \tilde{G}_{<k} \times_3 \tilde{G}_{k<}) \rangle \notag \\
	 & \quad \quad  + \frac{\lambda_n}{K-1}\sum_{k'=1}^{K-1} \|\mP_{k'}(G_k^* \times_2 \tilde{G}_{<k} \times_3 \tilde{G}_{k<})\|_{s}. \label{ineq:basic0}
\end{align}
About the regularization term, we apply the following inequality
\begin{align}
	&\frac{\lambda_n}{K-1}\sum_{k'=1}^{K-1} \|\mP_{k'}(G_k^* \times_2 \tilde{G}_{<k} \times_3 \tilde{G}_{k<})\|_{s} - \frac{\lambda_n}{K-1}\sum_{k'=1}^{K-1} \|\mP_{k'}(\hat{G}_k \times_2 \tilde{G}_{<k} \times_3 \tilde{G}_{k<})\|_{s} \notag \\
	&\leq \frac{\lambda_n}{K-1}\sum_{k'=1}^{K-1} \|\mP_{k'}((G_k^* - \hat{G}_k) \times_2 \tilde{G}_{<k} \times_3 \tilde{G}_{k<})\|_{s} \notag \\
	&= \frac{\lambda_n}{K-1}\sum_{k'=1}^{K-1} \left(\|Q_{k'}((G_k^* - \hat{G}_k) \times_2 \tilde{G}_{<k} \times_3 \tilde{G}_{k<})\|_{s}  \right. \notag \\
	& \quad  \left. -  \|\mP_{k'}((G_k^* - \hat{G}_k) \times_2 \tilde{G}_{<k} \times_3 \tilde{G}_{k<})\|_{s} -  \|Q_{k'}((G_k^* - \hat{G}_k) \times_2 \tilde{G}_{<k} \times_3 \tilde{G}_{k<})\|_{s}  \right), \label{ineq:pq}
\end{align}
by using the triangle inequality and the linearity of the random projection operator $\mP_{k'}$.
Here, we apply Theorem \ref{thm:random} with $\epsilon$ and obtain
\begin{align*}
	 &\left( \frac{1 - \epsilon}{2 R_{k'}} - 1 \right)\|Q_{k'}((G_k^* - \hat{G}_k) \times_2 \tilde{G}_{<k} \times_3 \tilde{G}_{k<})\|_{s} \\
	 &\quad \leq \|\mP_{k'}((G_k^* - \hat{G}_k) \times_2 \tilde{G}_{<k} \times_3 \tilde{G}_{k<})\|_{s} -  \|Q_{k'}((G_k^* - \hat{G}_k) \times_2 \tilde{G}_{<k} \times_3 \tilde{G}_{k<})\|_{s} \\
	 &\quad \leq \epsilon \|Q_{k'}((G_k^* - \hat{G}_k) \times_2 \tilde{G}_{<k} \times_3 \tilde{G}_{k<})\|_{s}.
\end{align*}
Here, the denominator in the left hand side follows Lemma 1 in \cite{negahban2011estimation}.
Then we have
\begin{align*}
	 &\left| \|\mP_{k'}((G_k^* - \hat{G}_k) \times_2 \tilde{G}_{<k} \times_3 \tilde{G}_{k<})\|_{s} -  \|Q_{k'}((G_k^* - \hat{G}_k) \times_2 \tilde{G}_{<k} \times_3 \tilde{G}_{k<})\|_{s}  \right| \\
	 &\leq \max\{\epsilon,| (1-\epsilon)/(2R_{k'}) - 1| \}\|Q_{k'}((G_k^* - \hat{G}_k) \times_2 \tilde{G}_{<k} \times_3 \tilde{G}_{k<})\|_{s}\\
	 & \leq (1 + \epsilon) \|Q_{k'}((G_k^* - \hat{G}_k) \times_2 \tilde{G}_{<k} \times_3 \tilde{G}_{k<})\|_{s}.
\end{align*}
Using this result and continue \eqref{ineq:pq} then we have
\begin{align}
	\eqref{ineq:pq} \leq \frac{\lambda_n}{K-1}\sum_{k'=1}^{K-1} (2+\epsilon)\|Q_{k'}((G_k^* - \hat{G}_k) \times_2 \tilde{G}_{<k} \times_3 \tilde{G}_{k<})\|_{s}. \label{ineq:pq2}
\end{align}

About the first term of the RHS of \eqref{ineq:basic0}, we decompose it as
\begin{align*}
	&\frac{1}{n} \langle Y - \mathfrak{X}(G_k^* \times_2 \tilde{G}_{<k} \times_3 \tilde{G}_{k<}), \mathfrak{X}((\hat{G}_k - G_k^*) \times_2 \tilde{G}_{<k} \times_3 \tilde{G}_{k<}) \rangle \\
	&= \frac{1}{n} \langle Y - \mathfrak{X}(G_k^* \times_2 G_{<k}^* \times_3 G_{k<}^*), \mathfrak{X}((\hat{G}_k - G_k^*) \times_2 \tilde{G}_{<k} \times_3 \tilde{G}_{k<}) \rangle \\
	&\quad  + \frac{1}{n} \langle  \mathfrak{X}(G_k^* \times_2 (\tilde{G}_{<k} - G_{<k}^*)\times_3 G_{k<}^*), \mathfrak{X}((\hat{G}_k - G_k^*) \times_2 \tilde{G}_{<k} \times_3 \tilde{G}_{k<}) \rangle \\
	&\quad  + \frac{1}{n} \langle  \mathfrak{X}(G_k^* \times_2 G_{<k}^*\times_3 (\tilde{G}_{k<} - G_{k<}^*)), \mathfrak{X}((\hat{G}_k - G_k^*) \times_2 \tilde{G}_{<k} \times_3 \tilde{G}_{k<}) \rangle \\
	&\quad  + \frac{1}{n} \langle  \mathfrak{X}(G_k^* \times_2 (\tilde{G}_{<k} - G_{<k}^*)\times_3 (\tilde{G}_{k<} - G_{k<}^*)), \mathfrak{X}((\hat{G}_k - G_k^*) \times_2 \tilde{G}_{<k} \times_3 \tilde{G}_{k<}) \rangle \\
	&=: T_0 + T_1 + T_2 + T_3.
\end{align*}
About the term $T_0$, we use the observation model \eqref{model:obs} and the adjoint operator $\mathfrak{X}^*$ then obtain
\begin{align*}
	T_0 &= \frac{1}{n} \langle \mE, \mathfrak{X}((\hat{G}_k - G_k^*) \times_2 \tilde{G}_{<k} \times_3 \tilde{G}_{k<} )\rangle \\
	&= \frac{1}{n} \langle  \mathfrak{X}^*(\mE), (\hat{G}_k - G_k^*) \times_2 \tilde{G}_{<k} \times_3 \tilde{G}_{k<} \rangle.
\end{align*}
Since the reshaping does not affect the value of the inner product, we continue to evaluate $T_0$ as
\begin{align*}
	T_0 &=\frac{1}{n} \langle  \mathfrak{X}^*(\mE), (\hat{G}_k - G_k^*) \times_2 \tilde{G}_{<k} \times_3 \tilde{G}_{k<} \rangle \\
	& =\frac{1}{n(K-1)} \sum_{k'=1}^{K-1} \langle Q_{k'}(\mathfrak{X}^*(\mE)), Q_{k'} ( (\hat{G}_k - G_k^*) \times_2 \tilde{G}_{<k} \times_3 \tilde{G}_{k<}) \rangle \\
	& \leq \frac{1}{n(K-1)} \sum_{k'=1}^{K-1} \|Q_{k'}(\mathfrak{X}^*(\mE))\|_{\infty} \| Q_{k'} ( (\hat{G}_k - G_k^*) \times_2 \tilde{G}_{<k} \times_3 \tilde{G}_{k<})\|_{s} \\
	&= \frac{1}{n(K-1)}\|\mathfrak{X}^*(\mE)\|_{\infty} \sum_{k'=1}^{K-1}  \| Q_{k'} ( (\hat{G}_k - G_k^*) \times_2 \tilde{G}_{<k} \times_3 \tilde{G}_{k<})\|_{s} \\
	& \leq \frac{\lambda_n}{(K-1)} \sum_{k'=1}^{K-1}  \| Q_{k'} ( (\hat{G}_k - G_k^*) \times_2 \tilde{G}_{<k} \times_3 \tilde{G}_{k<})\|_{s}.
\end{align*}
The first inequality follows the H\"older's inequality, and the second inequality is derived by the setting of $\lambda_n$.

Substituting \eqref{ineq:pq2} and the bounds with $T_1,T_2,T_3$ and $T_0$ into \eqref{ineq:basic0}, finally we obtain
\begin{align}
	 &\frac{1}{2n} \| \mathfrak{X}((\hat{G}_k - G_k^*) \times_2 \tilde{G}_{<k} \times_3 \tilde{G}_{k<})  \|^2  \notag \\
	 & \leq T_1 + T_2 + T_3 + \underbrace{\frac{\lambda_n }{K-1} \sum_{k'=1}^{K-1} (2+\epsilon) \|Q_{k'}((G_k^* - \hat{G}_k) \times_2 \tilde{G}_{<k} \times_3 \tilde{G}_{k<})\|_{s}}_{=:T_4}. \label{ineq:basic1}
\end{align}
Here, we obtain the basic inequality.

\paragraph{Step 2.} Bound the RHS of the basic inequality.

For brevity, we introduce notation 
\begin{align*}
	\tilde{\Delta}_k := (\hat{G}_k - G_k^*) \times_2 \tilde{G}_{<k} \times_3 \tilde{G}_{k<}.
\end{align*}

We bound $T_1$ by using the Cauchy-Schwartz inequality as
\begin{align*}
T_1 &= \frac{1}{n} \langle  \mathfrak{X}(G_k^* \times_2 (\tilde{G}_{<k} - G_{<k}^*)\times_3 G_{k<}^*), \mathfrak{X}((\hat{G}_k - G_k^*) \times_2 \tilde{G}_{<k} \times_3 \tilde{G}_{k<}) \rangle \\
	& \leq \frac{1}{n} \|\mathfrak{X}(G_k^* \times_2 (\tilde{G}_{<k} - G_{<k}^*)\times_3 G_{k<}^*)\| \|\mathfrak{X}((\hat{G}_k - G_k^*) \times_2 \tilde{G}_{<k} \times_3 \tilde{G}_{k<})\| \\
	& \leq \frac{1}{n} \|G_k^* \times_2 (\tilde{G}_{<k} - G_{<k}^*)\times_3 G_{k<}^*\|_F \|(\hat{G}_k - G_k^*) \times_2 \tilde{G}_{<k} \times_3 \tilde{G}_{k<}\|_F\\
	& \leq \frac{1}{n} \|G_k^* \times_2 (\tilde{G}_{<k} - G_{<k}^*)\times_3 G_{k<}^*\|_F \|\tilde{\Delta}_k\|_F,	
\end{align*}
here we use the relation $\|\mathfrak{X}(X)\|^2 \leq \|X\|_F^2$ for all $X \in \Theta$.
We introduce a constant $c_k$ for $k = 1,\ldots,K$ which satisfying $c_k \geq \| A \times_k G_k^* \|_F / \|A\|_F$ where $A$ is a tensor with proper size.
Since we suppose that the reshaped matrix from $G_k^*$ has $R_k$ row rank, we can guarantee that $c_k$ is positive and finite.
Using $c_k$, we have
\begin{align*}
	&\|G_k^* \times_2 (\tilde{G}_{<k} - G_{<k}^*)\times_3 G_{k<}^*\|\\
	&\leq c_k \prod_{k' > k}c_{k'} \|\tilde{G}_{<k} - G_{<k}^*\|_F \\
	& \leq c_k \prod_{k' > k}c_{k'} \sum_{k' < k} \|\tilde{G}_{k'} - G_{k'}^*\|_F \prod_{\ell < k, \ell \neq k'} c_{\ell} \\
	& \leq \prod_{k' \geq k}c_{k'} (k-1) \prod_{\ell < k}c_{\ell} \Xi(\tilde{\mG}) \\
	&= (k-1) \prod_{k'=1}^K c_{k'} \Xi(\tilde{\mG}).
\end{align*}
Here, we define $C_K :=  \prod_{k'=1}^K c_{k'}$, we obtain
\begin{align}
	T_1 \leq \frac{1}{n}(k-1)C_K\Xi(\tilde{\mG}). \label{ineq:t1}
\end{align}

Similarly, we obtain
\begin{align}
	T_2 \leq \frac{1}{n}(K-k)C_K\Xi(\tilde{\mG}). \label{ineq:t2}
\end{align}

About $T_3$, we have
\begin{align*}
	T_3 &=  \frac{1}{n} \langle  \mathfrak{X}(G_k^* \times_2 (\tilde{G}_{<k} - G_{<k}^*)\times_3 (\tilde{G}_{k<} - G_{k<}^*)), \mathfrak{X}((\hat{G}_k - G_k^*) \times_2 \tilde{G}_{<k} \times_3 \tilde{G}_{k<}) \rangle \\
	&\leq  \frac{1}{n} \|G_k^* \times_2 (\tilde{G}_{<k} - G_{<k}^*)\times_3 (\tilde{G}_{k<} - G_{k<}^*)\|_F \|\Delta_k\|_F.
\end{align*}
We evaluate the first norm as
\begin{align*}
	&\|G_k^* \times_2 (\tilde{G}_{<k} - G_{<k}^*)\times_3 (\tilde{G}_{k<} - G_{k<}^*)\|_F \\
	&\leq c_k \left( \|\tilde{G}_{<k}\times_3 (\tilde{G}_{k<} - G_{k<}^*)\|_F + \|{G}_{<k}^*\times_3 (\tilde{G}_{k<} - G_{k<}^*)\|_F \right) \\
	& \leq \frac{2}{n} (K-1) C_K \Xi(\tilde{\mG}).
\end{align*}
Then, we have
\begin{align}
	T_3 \leq \frac{2}{n}(K-1)C_K\Xi(\tilde{\mG}). \label{ineq:t3}
\end{align}

To bound $T_4$, we apply the same line of the proof of Theorem \ref{thm:convex}.
Along with Lemma 1 in \cite{negahban2011estimation}, we bound the Schatten norm of $Q_k(\tilde{\Delta}_k)$ and apply the Cauchy-Schwartz inequality, then obtain
\begin{align*}
	T_4 \leq \frac{2 \lambda_n (2+\epsilon)}{K-1} \sum_{k'=1}^{K-1} \sqrt{2R_{k'}}  \|Q_{k'}(\tilde{\Delta}_{k})\| =  \frac{2 \lambda_n (2+\epsilon)}{K-1}\sum_{k'=1}^{K-1} \sqrt{2R_{k'}}  \|\tilde{\Delta}_{k}\|.
\end{align*}

Combining the bound with \eqref{ineq:t1}, \eqref{ineq:t2} and \eqref{ineq:t3}, we update the bound \eqref{ineq:basic1} as
\begin{align}
	\frac{1}{2n} \| \mathfrak{X}(\tilde{\Delta}_k)\|^2 &\leq \frac{3(K-1)C_K}{n}\Xi(\tilde{\mG}) \|\mathfrak{X}(\tilde{\Delta}_k)\| + \frac{2 \lambda_n (2+\epsilon)}{K-1}\sum_{k'=1}^{K-1} \sqrt{2R_{k'}}  \|\tilde{\Delta}_k\| \notag \\
	& \leq \frac{3(K-1)C_K}{n}\Xi(\tilde{\mG}) \|\tilde{\Delta}_k\|_F + \frac{2 \lambda_n (2+\epsilon)}{K-1}\sum_{k'=1}^{K-1} \sqrt{2R_{k'}}  \|\tilde{\Delta}_k\|  . \label{ineq:basic2}
\end{align}

\paragraph{Step 3.} Bound below the LHS of the basic inequality.

%Substitute it into \eqref{ineq:basic2}, we have
%\begin{align}
%	\frac{1}{6n} \| \mathfrak{X}(\Delta_k)\|^2 -\frac{8}{n}(K-1)^2 C_k^2 \Xi^2(\tilde{\mG}) \leq \frac{4(K-1)C_K}{n}\Xi(\tilde{\mG}) \|\mathfrak{X}(\tilde{\Delta}_k)\| + \frac{\lambda_n K}{K-1}\sum_{k'=1}^{K-1} \sqrt{R_{k'}}  \|\tilde{\Delta}_k\|. \label{ineq:basic3}
%\end{align}

We apply the matrix completion theory developed by \cite{candes2010matrix} and \cite{candes2012exact}.
Let $k' \in \{1,\ldots,K\}$ be the index satisfying Assumption \ref{asmp:incoherence_convex}.
Since the value of the $L^2$-norm and the Frobenius norm is invariant to the shape of tensors, we compare the value of $\|Q_{k'}(\Delta_k)\|_F$ and $\|\tilde{\mathfrak{X}}(Q_{k'}(\Delta_k))\|_F$ with $k'$ instead of $\|\Delta_k\|_F$ and $\|\mathfrak{X}(\Delta_k)\|_F$.

For the matrix $Q_{k'}(X^*)$, we apply Assumption \ref{asmp:incoherence_convex} and obtain that $Q_{k'}(X^*)$ has the $\mu_{k'}$-incoherence property.
%Also, we obtain that $Q_{k'}( G_k^* \times_2 \tilde{G}_{<k} \times_3 \tilde{G}_{k<})$ satisfies $\tilde{\mu}$-incoherent property with $\tilde{\mu} = *****$.
Then, we apply Theorem 2 and Theorem 7 in \cite{candes2010matrix}, we obtain the following inequality as
\begin{align*}
	\|Q_{k'}(\tilde{\Delta}_{k})\|_F \leq \left(  \sqrt{\frac{48  \min\{I_{\leq k'}, I_{k'<}\}}{n}}+ 1 \right) \|\mathfrak{X}(Q_{k'}(\tilde{\Delta}_{k}))\|,
\end{align*}
with probability at least $1-( \max\{I_{\leq k'}, I_{k'<}\})^{-3}$ and
\begin{align*}
	n \geq C_{m} \mu_{k'}^2  \max\{I_{\leq k'}, I_{k'<}\} R_{k'} \log^3  \max\{I_{\leq k'}, I_{k'<}\},
\end{align*}
with a constant $C_m > 0$.
Then we obtain that
\begin{align*}
	&\frac{1}{n}\|\mathfrak{X}(\tilde{\Delta}_k)\|^2 = \frac{1}{n}\|\tilde{\mathfrak{X}}(Q_{k'}(\tilde{\Delta}_k))\|^2 \\
	& \geq (144  \min\{I_{\leq k'}, I_{k'<}\} + 3n)^{-1} \|Q_{k'}(\tilde{\Delta}_k)\|_F^2 =: C_{\kappa} \|Q_{k'}(\tilde{\Delta}_k)\|_F^2 = C_{\kappa} \|\tilde{\Delta}_k\|_F^2,
\end{align*}
where $C_{\kappa} > 0$ since $n \leq \prod_k I_k$.
Using this result into \eqref{ineq:basic2}, we have
\begin{align*}
	\frac{C_{\kappa}}{6} \|\tilde{\Delta}_k\|_F^2 \leq \frac{3(K-1)C_K}{n}\Xi(\tilde{\mG}) \|\tilde{\Delta}_k\|_F + \frac{2 \lambda_n (2+\epsilon)}{K-1}\sum_{k'=1}^{K-1} \sqrt{2R_{k'}}  \|\tilde{\Delta}_k\|_F.
\end{align*}
Then we obtain the inequality
\begin{align*}
	\frac{C_{\kappa}}{6} \|\tilde{\Delta}_k\|_F^2 \leq \frac{3(K-1)C_K}{n}\Xi(\tilde{\mG}) \|\tilde{\Delta}_k\|_F + \frac{2 \lambda_n (2+\epsilon) }{K-1}\sum_{k'=1}^{K-1} \sqrt{2R_{k'}}  \|\tilde{\Delta}_k\|_F.
\end{align*}
We divide the both hands side by $\|\tilde{\Delta}_k\|_F$ about the first term, and consider the root about the second term, then we have
\begin{align}
	\frac{C_{\kappa}}{6} \|\tilde{\Delta}_k\|_F & \leq \frac{3(K-1)C_K}{n}  \Xi(\tilde{\mG}) + \frac{2 \lambda_n (2+\epsilon)}{K-1}\sum_{k'=1}^{K-1} \sqrt{2R_{k'}},  \label{ineq:basic5}
\end{align}
by using the property $\|\mathfrak{X}(X)\| \leq \|X\|$ for all $X \in \mX$.

Finally, we define
\begin{align*}
	\Delta_k := (\hat{G}_k - G_k^*) \times_2 G_{<k}^* \times_3 G_{k<}^*,
\end{align*}
and compare $\Delta_k$ and $\tilde{\Delta}_k$ as
\begin{align*}
	\|\Delta_k\| \leq \|\tilde{\Delta}_k\| + \|\tilde{\Delta}_k - \Delta_k\|.
\end{align*}
We evaluate the last term by the same way of the step 2 as
\begin{align*}
	&\|\tilde{\Delta}_k - \Delta_k\|_F \\
	& \leq \left\| (\hat{G}_k - G_k^*) \times_2 (G_{<k}^* \times_3 G_{k<}^* - \tilde{G}_{<k} \times_3 \tilde{G}_{k<})\right\|_F \\
	& \leq 2 c_k \left\{\left\| (G_{<k}^* - \tilde{G}_{<k} ) \times_3 G_{k<}^*\right\|_F + \left\| \tilde{G}_{<k} \times_3 ( G_{k<}^* -  \tilde{G}_{k<})\right\|_F  \right\} \\
	& \leq 2 (K-1) C_K \Xi(\tilde{\mG}).
\end{align*}
Then, we have
\begin{align*}
	\|\Delta_k\|_F - 2 (K-1) C_k\Xi(\tilde{\mG}) \leq  \|\tilde{\Delta}_k\|_F.
\end{align*}
Substituting the result into \eqref{ineq:basic5}, we obtain
\begin{align}
	&\frac{C_{\kappa}}{6} \|\Delta_k\|_F \leq   2(K-1)C_K(1+n^{-1})  \Xi(\tilde{\mG})  + \frac{2 \lambda_n (2+\epsilon)}{K-1}\sum_{k'=1}^{K-1} \sqrt{2R_{k'}}. \label{ineq:basic6}
\end{align}

\paragraph{Step 4.} Combining the results.

Substituting the result of the step 3 into \eqref{ineq:basic6}, we finally obtain
\begin{align*}
	&\|\hat{G}_k - G_k^*\| \leq  6 (C_{\kappa} C_K)^{-1} 2(K-1)C_K(1+n^{-1})  \Xi(\tilde{\mG})  + \frac{2 \lambda_n (2+\epsilon)}{K-1}\sum_{k'=1}^{K-1} \sqrt{2R_{k'}}.
\end{align*}

\end{proof}

We back to the proof of Theorem \ref{thm:als}.
Based on the result of Lemma \ref{lem:als_1}, we will take two steps: (a) evaluate the distance between $X(\tilde{\mG})$ and $X(\mG^*)$, and (b) show the convergence as the ALS iteration proceeds.

\paragraph{Step (a).} Evaluate the distance between $X(\tilde \mG)$ and $X(\mG^*)$.

For brevity, we introduce new notation for $X(\mG)$.
Using the tensor product, we denote
\begin{align*}
	X(\mG) = G_1 \times_2 G_2 \times_3 \cdots \times_{K-1} G_{K-1} \times_K G_K.
\end{align*}
Then, we evaluate the distance between $X(\mG)$ and $X(\mG^*)$ as
\begin{align*}
	&X(\tilde\mG) - X(\mG^*)\\
	&=\tilde G_1 \times_2 \cdots \times_K \tilde G_K - G_1^* \times_2 \cdots \times_K G_K^* \\
	&=(\tilde G_1 \times_2 \cdots \times_{K-1} \tilde G_{K-1}\times_K \tilde G_K - \tilde G_1 \times_2 \cdots \times_{K-1} \tilde G_{K-1} \times_K G_K^*) \\
	& \quad + (\tilde G_1 \times_2 \cdots \times_{K-1} \tilde G_{K-1} \times_K G_K^* - \tilde G_1 \times_2 \cdots \times_{K-1} G_{K-1}^* \times_K G_K^*) \\
	& \cdots \\
	& \quad +  (\tilde G_1 \times_2 G_2^* \times _3\cdots \times_{K-1} G_{K-1}^* \times_K G_K^* - G_1^* \times_2 \cdots \times_K G_K^*) \\
	&= \sum_{k=1}^{K}  \tilde G_{<k} \times_{k} (\tilde G_{k} -  G_{k}^*) \times_{k+1}   G_{k<}^*.
\end{align*}

Then, we consider the Frobenius norm as
\begin{align*}
	&\|X(\tilde\mG) - X(\mG^*)\|_F^2 \\
	&= \left\| \sum_{k=1}^{K}  \tilde G_{<k} \times_{k} (\tilde G_{k} -  G_{k}^*) \times_{k+1}   G_{k<}^* \right\|_F^2 \\
	&= \sum_{k=1}^{K} \sum_{k'=1}^{K} \left\langle  \tilde G_{<k} \times_{k} (\tilde G_{k} -  G_{k}^*) \times_{k+1}   G_{k<}^*,  \tilde G_{<k'} \times_{k'} (\tilde G_{k'} -  G_{k'}^*) \times_{k'+1}   G_{k'<}^* \right\rangle \\
	&= \sum_{k=1}^K  \left\|   \tilde G_{<k} \times_{k} (\tilde G_{k} -  G_{k}^*) \times_{k+1}   G_{k<}^* \right\|_F^2 \\
	&\quad + \sum_{k=1}^{K} \sum_{k'\neq k} \left\langle  \tilde G_{<k} \times_{k} (\tilde G_{k} -  G_{k}^*) \times_{k+1}   G_{k<}^*,  \tilde G_{<k'} \times_{k'} (\tilde G_{k'} -  G_{k'}^*) \times_{k'+1}   G_{k'<}^* \right\rangle.
\end{align*}
As same as the proof of Lemma \ref{lem:als_1}, we bound the first term as
\begin{align*}
	\left\|   \tilde G_{<k} \times_{k} (\tilde G_{k} -  G_{k}^*) \times_{k+1}  \tilde G_{k<} \right\|_F^2 \leq C_K^2\Xi^2(\tilde{\mG}).
\end{align*}
For the second term, we obtain
\begin{align*}
	&\left \langle  \tilde G_{<k} \times_{k} (\tilde G_{k} -  G_{k}^*) \times_{k+1}  \tilde G_{k<},  \tilde G_{<k'} \times_{k'} (\tilde G_{k'} -  G_{k'}^*) \times_{k'+1}  \tilde G_{k'<} \right\rangle \\
	&\leq \left\| \tilde G_{<k} \times_{k} (\tilde G_{k} -  G_{k}^*) \times_{k+1}  \tilde G_{k<}\right\|_F \left\| \tilde G_{<k'} \times_{k'} (\tilde G_{k'} -  G_{k'}^*) \times_{k'+1}  \tilde G_{k'<} \right\|_F \\
	&\leq  C_K^2\Xi^2(\tilde{\mG}).
\end{align*}
Combining the results, we obtain 
\begin{align*}
	\|X(\tilde\mG) - X(\mG^*)\|_F^2 \leq (K+K^2)C_k^2 \Xi^2(\tilde{\mG}).
\end{align*}

\paragraph{Step (b).} Show convergence with the ALS iteration.

Let $\mG^{t}$ be a set $\mG$ obtained by $t$-th ALS iteration.
By the result of the step (a), we have
\begin{align*}
	\|X(\mG^t) - X(\mG^*)\|_F^2 \leq (K+K^2)C_K^2 \Xi(\mG^t).
\end{align*}
Applying the result of Lemma \ref{lem:als_1}, let $\hat{G}_k^t$ be the minimizer of optimization of \eqref{opt:als3} with the $t$-th ALS iteration, we obtain for each $t = 1,2,\ldots$,
\begin{align*}
	&\Xi(\mG^t) = \max_{k} \|\hat{G}_k^t - G_k^*\|_F \leq 6 (C_{\kappa} C_K)^{-1} 2(K-1)C_K(1+n^{-1})  \Xi(\tilde{\mG})  + \frac{2 \lambda_n (2+\epsilon)}{K-1}\sum_{k'=1}^{K-1} \sqrt{2R_{k'}}.
\end{align*}
The inequality holds since $\mG^{t-1}$ is the fixed $\tilde{\mG}$ for the $t$-th ALS iteration.
We define the contraction coefficient
\begin{align*}
	\chi := 12C_{\kappa}^{-1} (K-1)C_K(1.5+n^{-1}),
\end{align*}
and using the assumption that $\chi < 1$, we have
\begin{align}
	\Xi(\mG^t) \leq \max \left\{ \chi^t \Xi(\mG^0), 6 (C_{\kappa} C_K)^{-1}   \frac{2 \lambda_n (2 + \epsilon)}{K}\sum_{k'=1}^{K-1} \sqrt{2R_{k'}} \right\}, \label{ineq:xi}
\end{align}
where $\mG^0$ is an initial value of $\mG$.
With Assumption \ref{asmp:intial}, we set $t$ sufficiently large as 
\begin{align*}
	t \geq (\log \chi)^{-1} \left\{\log \left( 6 (C_{\kappa} C_K)^{-1}  \left( \frac{2 \lambda_n (2 + \epsilon)}{K}\sum_{k'=1}^{K-1} \sqrt{2R_{k'}}\right) \right) - \log \Xi(\mG^0)\right\},
\end{align*}
we obtain
\begin{align*}
	\|X(\mG^t) - X(\mG^*)\|_F^2 \leq 12 C_{\kappa}^{-1} K^2C_K \left( \frac{2 \lambda_n (2 + \epsilon)}{K}\sum_{k'=1}^{K-1} \sqrt{2R_{k'}} \right)^2.
\end{align*}
As we set $\hat{X} := X(\mG^t)$, we obtain the result.

\qed

\if0

\section{Technical Lemmas}

\begin{lemma}\label{lem:neg}
	When $\lambda_n \gtrsim \|\mathfrak{X}^*(\mE)\|_{s,\infty}$ is satisfied, we have
	\begin{enumerate}
		\item $\mbox{rank}(\Delta_{(k)'}) \leq 2 r_k$ for each $k = 1,\ldots, K-1$.
		\item $\sum_{k=1}^{K-1} \|\Delta_k''\|_s \leq 3 \sum_{k=1}^{K-1} \|\Delta_{(k)}'\|_s + \sum_{j > r_k} \sigma_j(X^*_{(k)})$ .
	\end{enumerate}
\end{lemma}

\begin{lemma}\label{lem:neg_als}
	When $\lambda_n \gtrsim \|\mathfrak{X}^*(\mE)\|_{s,\infty}$ is satisfied, we have
	\begin{enumerate}
		\item $\mbox{rank}(\Delta_{(k)'}) \leq 2 r_k$ for each $k = 1,\ldots, K-1$.
		\item $\sum_{k=1}^{K-1} \|\Delta_k''\|_s \leq 3 \sum_{k=1}^{K-1} \|\Delta_{(k)}'\|_s + \sum_{j > r_k} \sigma_j(X^*_{(k)})$ .
	\end{enumerate}
\end{lemma}

\bc{
\begin{lemma} \label{lem:inc_2}
	When assumption \ref{asmp:incoherence_convex} is satisfied, $\tilde{\mG}$ is $\tilde{\mu}$-incoherent with
	\begin{align*}
		\tilde{\mu} = \mu + \frac{ C_h^2 \Xi^2(\tilde{\mG})I_k}{R_k}.
	\end{align*}
\end{lemma}
\begin{proof}
Consider the following value 
\begin{align*}
		\max_{i_k,r_{k-1}} \left\{ \left\| P_{\tilde{G}_k,r_{k-1}} (e_{i_k}) \right\| \right\} &\leq \max_{i_k,r_{k-1}} \left\{\left\| P_{G_k^*,r_{k-1}} (e_{i_k}) \right\| + \left\|[\tilde{G}_k]_{i_k,r_{k-1},:} - [G_k^*]_{i_k,r_{k-1},:}\right\| \right\} \\
		&= \left\| P_{G_k^*,\overline{r}_{k-1}} (e_{\overline{i}_k}) \right\| + \left\|[\tilde{G}_k]_{\overline{i}_k,\overline{r}_{k-1},:} - [G_k^*]_{\overline{i}_k,\overline{r}_{k-1},:}\right\| ,
\end{align*}
where $\overline{i}_k$ and $\overline{r}_{k-1}$ are the maximizer.
Then, we use a constant $C_h > 0$ and continue the inequality as
\begin{align*}
		& \leq \left(\frac{ \mu  R_{k}}{I_k} \right)^{1/2} + C_h \left\|\tilde{G}_k - G_k^*\right\| \\
		& \leq  \left(\frac{ \mu  R_{k}}{I_k} \right)^{1/2} + C_h \Xi(\tilde{\mG}) \\
		& \leq  \left\{ \left(\mu + \frac{ C_h^2 \Xi^2(\tilde{\mG})I_k}{R_k}\right)\frac{   R_{k}}{I_k} \right\}^{1/2},
\end{align*}
by $\mu$-coherence of $\mG^*$ and the equivalence property of norms.
Then, we define 
\begin{align*}
	\tilde{\mu} :=\left(\mu + \frac{ C_h^2 \Xi^2(\tilde{\mG})I_k}{R_k}\right),
\end{align*}
and obtain the result.
\end{proof}
\begin{lemma} \label{lem:rsc}
	Suppose that $\Delta \in \Theta$ is a difference between a matrix $Z^* \in \R^{J_1 \times J_2}$ and its estimator $\hat{Z}$, and $Z^*$ satisfies
	\begin{align*}
		\max_{1\leq j_1 \leq J_1, 1 \leq j_2 \leq J_2}[Z^*]_{i,j} \leq \frac{\mu' \sqrt{R}}{\sqrt{J_1 J_2}}.
	\end{align*}
	Then, we have
	\begin{align*}
		\|\hat{Z} - Z^*\|_F \leq ***.
	\end{align*}
\end{lemma}
\begin{proof}
The proof is along with \cite{candes2010matrix} and others.
\end{proof}
}

\bc{
First, we fix $k'$ and state the incoherence property of $Q_{k'}(X^*)$.
Note that since $\mG^*$ is $\mu$-incoherent, we obtain
\begin{align*}
	|[[G_k^*]_{i_k} [G_{k+1}^*]_{i_{k+1}}]_{r_k,r_{k+1}}| \leq \left( \sum_{r_k = 1}^{R_k} [G_k^*]_{i_k,r_{k-1},r_k} \right)^{1/2}\left( \sum_{r_{k+1} = 1}^{R_{k+1}} [G_{k+1}^*]_{i_{k+1},r_k,r_{k+1}} \right)^{1/2} \leq \frac{\mu R_k}{\sqrt{I_k I_{k+1}}},
\end{align*}
for $k \in \{1,\ldots,K-1\}$ and any $r_{k-1}$ and $r_{k+1}$.
}

\fi

\section{Time Complexity of TT-RALS}

To update $g_k^{(\ell + 1)}$, we need to compute
\begin{itemize}
\item $A = \Omega^T \Omega$, which requires $O(nI^2R^4)$,
\item $B = \sum_{k'=1}^{K-1} \Gamma_{k'}^{T}\Gamma_{k'}$, which requires
  $O(KD^2I^2R^4)$,
\item the inversion of an $IR^2 \times IR^2$ matrix
  $(A+B)$, which requires $O(I^3R^6)$,
\item $c=\Omega^T Y$, which requires $O(nIR^2)$,
\item $d=\tilde{V}_k(W_{k'}^{(\ell)})$, which requires $O(D^2)$,
\item
  $e=\frac{1}{K-1}\sum_{k'=1}^{K-1} \Gamma_{k'}^{T}( \eta d -
  \beta_{k'}^{(\ell)}$, which requires $O(KD^2IR^2)$,
\end{itemize}

To update $W_{k'}^{(\ell + 1)}$, we need to compute
\begin{itemize}
\item $a = \Gamma_{k'} g_k^{(\ell+1)}$, which requires $O(D^2IR^2)$,
\item $\tilde{V}_k^{-1} ( a +\beta_{k'}^{(\ell)} )$, which requires $O(D^2)$,
\item the proximal operation, which requires $O(D^3)$.
\end{itemize}

To update $\beta_{k'}^{(\ell + 1)}$, we need to compute
\begin{itemize}
\item $a = \Gamma_{k'} g_k^{(\ell+1)}$, which requires $O(D^2IR^2)$,
\item $b=\tilde{V}_k(W_{k'}^{(\ell + 1)})$, which requires $O(D^2)$,
\end{itemize}

Because there are $g_k^{(\ell + 1)}$ for $k=1,\dots,K$, 
$W_{k'}^{(\ell + 1)}$ for $k,k'=1,\dots,K$, and
$\beta_{k'}^{(\ell + 1)}$ for $k,k'=1,\dots,K$, the total time complexity is 

\begin{align*}
 &O(K(nI^2R^4 + KD^2I^2R^4 + I^3R^6 + nIR^2 + D^2 + KD^2IR^2))
\\
& \quad + O(K^2(D^2IR^2 + D^2 + D^3))+ O(K^2(D^2IR^2 + D^2))
\\
&= 
  O(K(nI^2R^4 + KD^2I^2R^4 + I^3R^6))
+ O(K^2(D^2IR^2 + D^3))
\\
&= 
  O(K(nI^2R^4 + KD^2I^2R^4 + I^3R^6))
+ O(K^2D^2IR^2)
\\
&= 
  O(nKI^2R^4 + K^2D^2I^2R^4 + KI^3R^6)
\\
&= 
  O(nKI^2R^4 + K^2D^2I^2R^4)
\end{align*}
In the third line and the last line, we assumed $D=O(IR^2)$ and $IR^2=O(n)$, respectively.

%%% Local Variables:
%%% mode: latex
%%% TeX-master: "TTcomp_NIPS2017.tex"
%%% End:

\bibliographystyle{abbrv}
\bibliography{tt}
\end{document}